\def\eqref#1{equation~\ref{#1}}
\def\1{\bm{1}}
\def\vx{{\bm{x}}}
\def\mP{{\bm{P}}}
\def\mX{{\bm{X}}}
\def\mY{{\bm{Y}}}
\DeclareMathAlphabet{\mathsfit}{\encodingdefault}{\sfdefault}{m}{sl}
\SetMathAlphabet{\mathsfit}{bold}{\encodingdefault}{\sfdefault}{bx}{n}
\newtheorem{theorem}{Theorem}[section]
\newtheorem{proposition}[theorem]{Proposition}
\newtheorem{lemma}[theorem]{Lemma}
\newtheorem{corollary}[theorem]{Corollary}
\newcommand{\eat}[1]{}
\definecolor{kleinblue}{rgb}{0,0.18,0.65}
\title{SEGNO: Generalizing Equivariant Graph Neural Networks with Physical Inductive Biases}
\author{Yang Liu$^{1,2}$\thanks{Equal contribution. Work is done when Yang Liu and Jiashun Cheng worked as interns in Tencent AI Lab.} , Jiashun Cheng$^{1,2}$\footnotemark[1] , Haihong Zhao$^{1}$, Tingyang Xu$^{3}$, Peilin Zhao$^{3}$, Fugee Tsung$^{1, 2}$, \\
\textbf{Jia Li$^{1, 2}$\thanks{Corresponding authors (\texttt{jialee@ust.hk, yu.rong@hotmail.com})} , Yu Rong$^{3}$\footnotemark[2]} \\
$^{1}$The Hong Kong University of Science and Technology (Guangzhou) \\
$^{2}$The Hong Kong University of Science and Technology $\,$ $^{3}$Tencent AI Lab \\
}
\begin{document}

\maketitle

\begin{abstract}
Graph Neural Networks (GNNs) with equivariant properties have emerged as powerful tools for modeling complex dynamics of multi-object physical systems. 
However, their generalization ability is limited by the inadequate consideration of physical inductive biases: (1) Existing studies overlook the continuity of transitions among system states, opting to employ several discrete transformation layers to learn the direct mapping between two adjacent states; (2) Most models only account for first-order velocity information, despite the fact that many physical systems are governed by second-order motion laws. 
To incorporate these inductive biases, we propose the \textbf{S}econd-order \textbf{E}quivariant \textbf{G}raph \textbf{N}eural \textbf{O}rdinary Differential Equation (SEGNO). 
Specifically, we show how the second-order continuity can be incorporated into GNNs while maintaining the equivariant property.
Furthermore, we offer theoretical insights into SEGNO, highlighting that it can learn a unique trajectory between adjacent states, which is crucial for model generalization. 
Additionally, we prove that the discrepancy between this learned trajectory of SEGNO and the true trajectory is bounded. 
Extensive experiments on complex dynamical systems including molecular dynamics and motion capture demonstrate that our model yields a significant improvement over the state-of-the-art baselines.

\end{abstract}

\section{Introduction}

\begin{wrapfigure}[20]{r}{0.5\textwidth}
    \centering
    \includegraphics[width=0.5\textwidth]{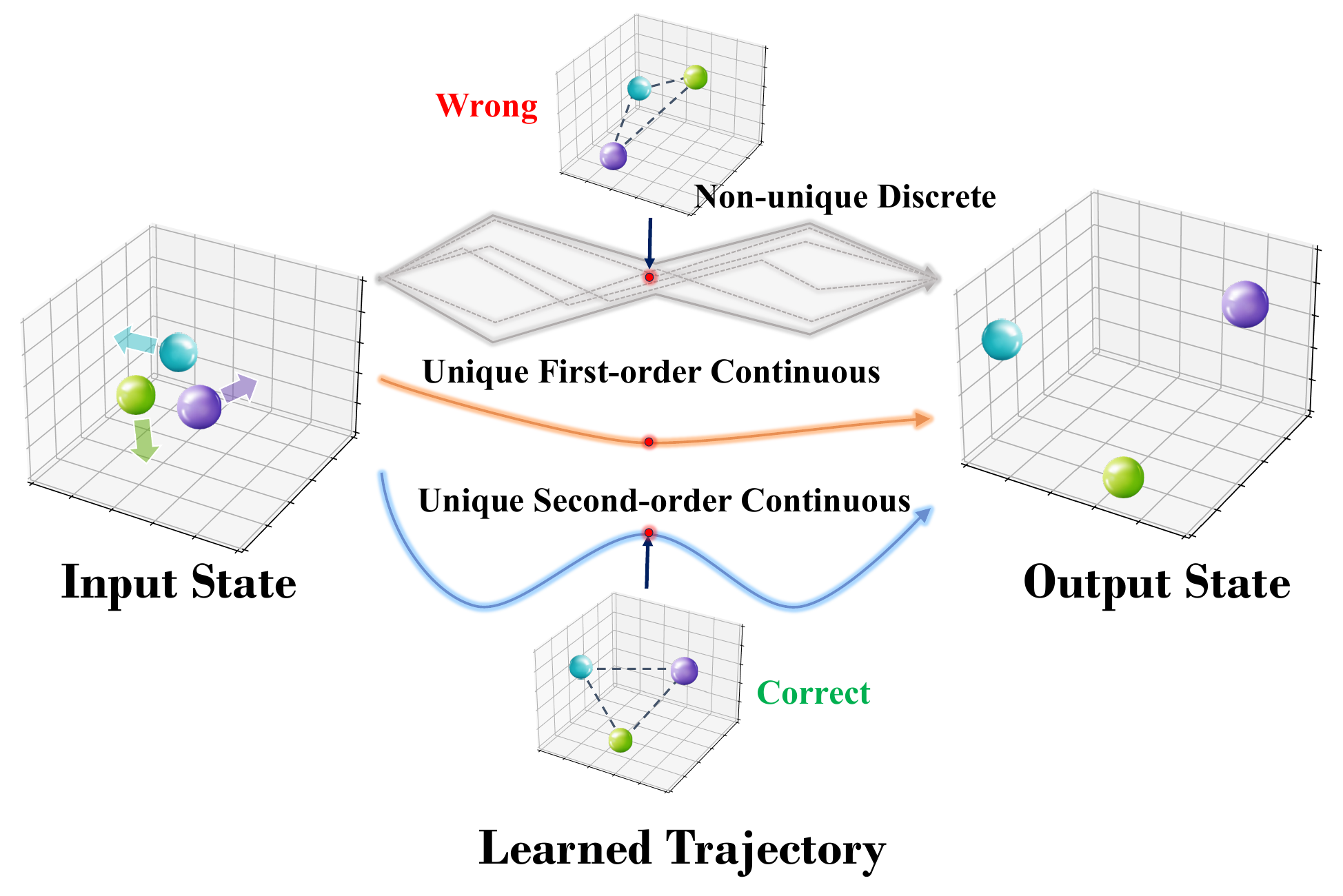}
    \vspace{-5ex}
    \caption{Learned trajectories of models with different inductive bias. All models can map input to output. However, discrete and first-order continuous models fail to learn the true intermediate states due to the lack of considering continuity and second-order laws.}
    \label{fig:intro}
\end{wrapfigure}

Equivariant Graph Neural Networks (Equiv-GNNs)~\citep{satorras2021n,DBLP:conf/nips/Han0XR22,brandstetter2021geometric,huang2022equivariant,wu2024equivariant} have emerged as essential tools for simulating the multi-object physical system, i.e., N-body systems, which is relevant to numerous fundamental scientific domains, including molecular dynamics~\citep{karplus2002molecular}, protein folding~\citep{gligorijevic2021structure}, robot motion planning/control~\citep{Siciliano_2009}. 
Specifically, given the input state, they learn to predict the output state after a specific timestep.
To achieve these, Equiv-GNNs model the whole system as a geometric graph, which treats physical objects as nodes, and physical relations as edges, and encode the symmetry into a message-passing network to ensure their outputs are equivariant with respect to any translation/orientation/reflection of the inputs. 
Such property makes them well-suited for learning the unknown dynamics of complex physical systems that cannot be described analytically~\citep{DBLP:conf/nips/Han0M00G22,DBLP:conf/nips/Han0XR22}.

Learning to model such interacting systems is challenging. 
Given the vast parameter space of GNNs and finite observations of transitions of system states, there would exist multiple solutions that satisfy observed data~\citep{curry2006model}. 
Therefore, learning the real dynamic function from these solutions is crucial to model generalization ability~\cite{DBLP:conf/iclr/GruverFSW22}. 
Though Equiv-GNNs~\citep{han2024survey} have partially addressed this challenge by eliminating models that lack symmetry, existing Equiv-GNNs have yet to incorporate sufficient physical inductive bias to model the physical dynamics. 

In particular, two essential inductive biases have not been well investigated in this field. 
First, existing models~\citep{satorras2021n,brandstetter2021geometric} are composed of a sequence of discrete state transformation layers, which learn a direct-mapping between adjacent states with discrete trajectories. 
We refer to them as \emph{discrete models}. 
They are inconsistent with the continuous nature of system trajectories and fail to learn correct intermediate states. 
Second, most models only account for first-order information. 
Many physical dynamical systems, such as Newton's equations of motion, are governed by second-order laws~\citep{norcliffe2020second}.
Therefore, these methods learn incomplete representations of the system's state and fail to capture the underlying dynamics of the physical systems. 
In Figure~\ref{fig:intro},  we illustrate the comparison of learned trajectories of models with different types of inductive bias. 

In this work, we take a deep insight into the continuity and second-order inductive bias in Equiv-GNNs and propose a framework dubbed \textbf{S}econd-order \textbf{E}quivariant \textbf{G}raph \textbf{N}eural \textbf{O}rdinary Differential Equation (SEGNO\footnote{SEGNO is also a musical term in Italian meaning ``sign'', marking the beginning or end of a musical repeat.}).
Different from previous models that use Equiv-GNNs to fit discrete kinematic states, SEGNO introduces Neural Ordinary Differential Equations (Neural ODE) to approximate a continuous trajectory between two observed states. 
Furthermore, to better estimate the underlying dynamics, SEGNO is built upon second-order motion equations to update the position and velocity of the physical systems. 
Theoretically, we prove the uniqueness of the learned latent trajectory of SEGNO and further provide an upper bound on the discrepancy between the learned and the actual latent trajectory.
Meanwhile, we prove that SEGNO can maintain equivariance properties identical to the backbone Equiv-GNNs. 
This property offers the flexibility to adapt various backbones in SEGNO to suit different downstream tasks in plug-and-play manner.
We conduct extensive experiments on both synthetic and real-world physical systems. 
Our results reveal that SEGNO has a better generalization ability over the state-of-the-art baselines and second-order inductive bias is beneficial to learn complex multi-object dynamics.

\vspace{-1ex}
\section{Background}
\vspace{-1ex}
\paragraph{N-body System}
We study N-body systems~\citep{kipf2018neural,huang2022equivariant} with a set of $N$ particles $\mathcal{P}=\{P_i\}_{i=1}^{N}$. 
At time $t$, the state of each particle in the system is represented by: 
\textbf{1.} geometric features including the position vector $\bm{q}_{i}^{(t)} \in \mathbb{R}^{3}$ and the velocity vector $\bm{\dot{q}}_{i}^{(t)} \in \mathbb{R}^{3}$;
\textbf{2.} non-geometric features such as mass or charge, denoted by $\bm{h}_{i} \in \mathbb{R}^{d}$;
\textbf{3.} spatial connection with others, where an edge $e_{ij}$ is constructed via geometric distance cutoff or physical interaction (e.g., chemical bonds) and the edge attributes (e.g., object distances, bond type) are denoted by $a_{ij}$.
For simplicity, we denote $(\bm{q}^{(t)}, \bm{\dot{q}}^{(t)})$ and $(\bm{h}, \bm{e} = \{ e_{ij} \}, \bm{a} = \{ a_{ij} \} )$ as dynamic and static state information at system level correspondingly.
In this work, we focus on dynamical systems that can be formulated as:
\vspace{-0.5ex}
\begin{equation}
\vspace{-0.5ex}
\label{eq:sec_simple}
    \bm{\ddot{q}}^{(t)} = \frac{d^{2} \bm{q}^{(t)}}{d t^{2}} = f(\bm{q}^{(t)}, \bm{h}),
\end{equation}
where $\bm{\ddot{q}}^{(t)}$ is the acceleration at time $t$. 
Given a trajectory $\bm{q}$\footnote{To avoid ambiguity, $\bm{q}_{i}^{(t)}$ denotes the position of the $i$-th particle at time $t$, whereas $\bm{q}$ denotes the trajectory over the entire time interval.} with the initial system states $(\bm{q}^{(t_0)}, \bm{\dot{q}}^{(t_0)})$ at time $t_0$ and static states $(\bm{h}, \bm{e}, \bm{a})$, our goal is to predict the subsequent position $\bm{q}^{(t_1)}$ within a fixed time interval $T=t_1-t_0$.

\paragraph{$E(3)$ Equivariance}
In 3-dimensional Euclidean space, the laws of physics remain invariant regardless of $E(3)$ transformations, including translation, rotation, and reflection.
Formally, a function $\mathcal{F}: \mX \times \mP \rightarrow \mY$, where $\mX, \mY \subset \mathbb{R}^{3}$, is $E(3)$-equivariant, if for any transformation $o \in E(3)$,
\vspace{-0.5ex}
\begin{equation}
\vspace{-0.5ex}
\begin{aligned}
    \mathcal{F}(o\circ \vx, \cdots) = o \circ \mathcal{F}(\vx, \cdots), \quad \vx\in \mX.
\end{aligned}
\end{equation}

\paragraph{Equivarant GNNs}
In general, given the system state at time $t_0$, a modern Equiv-GNN $\eta_{\theta}$ with parameters $\theta$ directly predicts $\bm{q}^{(t_1)}$ by leveraging several message passing layers on an interaction graph.
To maintain brevity and avoid ambiguity, we omit the temporal superscript $t$ here since the predictions at different times share the same model $\eta_{\theta}$. 
Specifically, each layer of $\eta_{\theta}$ computes
\vspace{-0.5ex}
\begin{equation}
\vspace{-0.5ex}
\begin{aligned}
    \bm{m}_{ij}  = \mu(\bm{q}_{i}, \bm{q}_{j}, \bm{\dot{q}}_{i}, \bm{\dot{q}}_{j}, \bm{h}_i, \bm{h}_j, a_{ij}), \quad \bm{q'}_{i}, \bm{\dot{q}'}_{i}, \bm{h'}_i  = \nu(\bm{q}_{i}, \bm{\dot{q}}_{i}, \bm{h}_i, \sum_{j\in \mathcal{N}_i}\bm{m_{ij}}),
\end{aligned}
\label{eq:gnn}
\end{equation}
where $\mu$ and $\nu$ are the edge message function and node update function, respectively, $\bm{m}_{ij}$ defines the message between node $i$ and $j$. 
$\mathcal{N}_i$ collects the neighbors of node $i$. 
The prediction is obtained by applying several iterations of message passing. 
To construct equivariant layers, $\mu$ and $\nu$ could be both equivariant (e.g., SEGNN~\citep{brandstetter2021geometric}) or alternatively, $\mu$ is equivariant and $\nu$ is invariant (e.g., EGNN~\citep{satorras2021n} and GMN~\citep{huang2022equivariant}).

\vspace{-1ex}
\section{SEGNO Framework}
\vspace{-1ex}
In this section, we introduce how the proposed SEGNO works and its equivariant properties. 
In our dynamic systems, we incorporate the ODE formulation to model the latent continuous trajectory with initial system states $(\bm{q}^{(t_0)}, \bm{\dot{q}}^{(t_0)})$. 
The position $\bm{q}^{(t_0 + t')}$ for any $t' \in [0, T]$ is calculated as 
\vspace{-1ex}
\begin{equation}
\vspace{-1ex}
\begin{aligned}
    \phi_{t', g}(\bm{q}^{(t_{0})}) & \coloneqq \bm{q}^{(t_{0} + t')} = \bm{q}^{(t_{0})} + \int_{t_{0}}^{t_{0} + t'} \bigl( \bm{\dot{q}}^{(t_{0})} + \int_{t_{0}}^{t} f(\bm{q}^{(m)}, \bm{h}) \, dm \bigr) \, dt \\
    & = \bm{q}^{(t_{0})} + \int_{t_{0}}^{t_{0} + t'} g(\bm{q}^{(t)}, \bm{h}) \, dt,
\end{aligned}
\label{eq:ode}
\end{equation}
where $g$ represents a mapping from the trajectory to its first-order derivative $\bm{\dot{q}}^{(t)}$. 
In this vein, we can denote $\bm{q}^{(t_1)}$ as $\phi_{T, g}(\bm{q}^{(t_{0})})$. Note that most physical dynamical systems follow the second-order motion law. The velocity $\bm{\dot{q}}^{(t_{0} + t')}$ for any $t' \in [0, T]$ can further be computed as 
\vspace{-1ex}
\begin{equation}
\vspace{-1ex}
\begin{aligned}
    \psi_{t', g, f}(\bm{q}^{(t_{0})}) \coloneqq \bm{\dot{q}}^{(t_{0} + t')} = g(\bm{q}^{(t_{0})}, \bm{h}) + \int_{t_{0}}^{t_{0} + t'} f(\bm{q}^{(t)}, \bm{h}) \, dt.
\end{aligned}
\end{equation}
To incorporate the second-order inductive bias, SEGNO parameterizes the acceleration function
\begin{equation}
\begin{aligned}
    \bm{\ddot{q}}^{(t)}_{\theta} = f_{\theta}(\bm{q}^{(t)}, \bm{h}),
\end{aligned}
\label{eq:core}
\end{equation}
where $f_{\theta}$, an approximation of $f$, represents an Equiv-GNN with parameters $\theta$ which computes:
\vspace{-1ex}
\begin{equation}
\vspace{-1ex}
\begin{gathered}
    \bm{m}_{ij}^{(t)} = \mu(\bm{q}_{i}^{(t)}, \bm{q}_{j}^{(t)}, \bm{h}_{i}, \bm{h}_{j}, a_{ij}), \quad \bm{\ddot{q}}_{\theta,i}^{(t)}, \bm{h}_{i} = \nu(\bm{q}_{i}^{(t)}, \bm{h}_{i}, \sum_{j \in \mathcal{N}_i} \bm{m}_{ij}^{(t)}),
\end{gathered}
\label{eq:segno}
\end{equation}
where $\mu$ and $\nu$ are determined by this Equiv-GNN backbone. 
Let $\bm{q}_{\theta}$ denote the approximated trajectory generated by SEGNO satisfying the initial conditions $\bm{q}^{(t_0)}_{\theta} = \bm{q}^{(t_0)}, \bm{\dot{q}}^{(t_0)}_{\theta} = \bm{\dot{q}}^{(t_0)}$.
Then, following Eq.~\ref{eq:ode}, the predicted position of SEGNO at time $t_1$ can be represented by
\vspace{-1ex}
\begin{equation}
\vspace{-1ex}
\begin{aligned}
    \phi_{T, g_{\theta}}(\bm{q}^{(t_{0})}) & = \bm{q}^{(t_1)}_{\theta} = \bm{q}^{(t_0)} + \int_{t_0}^{t_0 + T} g_{\theta}(\bm{q}^{(t)}, \bm{h}) \, dt \\
    & = \bm{q}^{(t_0)}_{\theta} + \int_{t_0}^{t_0 + T} \bigl( \bm{\dot{q}}^{(t_0)}_{\theta} + \int_{t_0}^{t} f_{\theta}(\bm{q}^{(m)}, \bm{h}) \, dm \bigr) \, dt \\
    & = \bm{q}^{(t_0)}_{\theta} + \int_{t_0}^{t_0 + T} \psi_{t, g_{\theta}, f_{\theta}}(\bm{q}_{\theta}^{(t_{0})}) \, dt, 
\end{aligned}
\label{eq:con_int}
\end{equation}
where $g_{\theta}$, a parameterized version of $g$, is determined by the integral of the GNN $f_{\theta}$ in Eq.~\ref{eq:ode}.
Since it is infeasible to directly calculate the integration in $\phi_{T, g_{\theta}}(\bm{q}^{(t_{0})})$ and $\psi_{T, g_{\theta}, f_{\theta}}(\bm{q}_{\theta}^{(t_{0})})$ with parameterized $g_{\theta}, f_{\theta}$, in SEGNO, we utilize an ODE solver to generate a discrete trajectory that serves as an approximation of the latent continuous trajectory. 
Specifically, we divide the entire time interval $T$ into $\tau$ equal sub-intervals with timestep $\Delta t = T/\tau$.
We then denote $\Psi_{\Delta t, g, f}$ and $\Phi_{\Delta t, g}$ as the numerical integrators that approach $\psi_{\Delta t, g, f}$ and $\phi_{\Delta t, g}$ using the following equations
\vspace{-1ex}
\begin{equation}\label{eq:solver}
\vspace{-1ex}
\begin{gathered}
    \bm{\dot{q}}_{\theta}^{(t + \Delta t)} = g_{\theta}(\bm{q}_{\theta}^{(t + \Delta t)}, \bm{h}) = \Psi_{\Delta t, g_{\theta}, f_{\theta}}(\bm{q}_{\theta}^{(t)}) = \bm{\dot{q}}_{\theta}^{(t)} + \mathcal{G}_{1}(f_{\theta}(\bm{q}_{\theta}^{(t)}, \bm{h}), \Delta t), \\
    \bm{q}_{\theta}^{(t + \Delta t)} = \Phi_{\Delta t, g_{\theta}}(\bm{q}_{\theta}^{(t)}) = \bm{q}_{\theta}^{(t)} + \mathcal{G}_{2}(\Psi_{\Delta t, g_{\theta}, f_{\theta}}(\bm{q}_{\theta}^{(t)}), \Delta t),
\end{gathered}
\end{equation}
where $\mathcal{G}_{1}$ and $\mathcal{G}_{2}$ are the increment functions that approximate the increment of a continuous integral given the initial value of the integrand and the integration width $\Delta t$.
For instance, with the increment functions $\mathcal{G}_{1}(x, y) = \mathcal{G}_{2}(x, y) = x \times y$, the numerical integrators become the Euler integrators
\begin{equation}\label{eq:euler_solver}
\vspace{-1ex}
\begin{aligned}
    \bm{\dot{q}}_{\theta}^{(t + \Delta t)} = \bm{\dot{q}}_{\theta}^{(t)} + f_{\theta}(\bm{q}_{\theta}^{(t)}) \Delta t, \quad \bm{q}_{\theta}^{(t + \Delta t)} = \bm{q}_{\theta}^{(t)} + \bm{\dot{q}}_{\theta}^{(t + \Delta t)} \Delta t.
\end{aligned}
\end{equation}
For approximation, SEGNO composites $\tau$ integrator $\Phi_{\Delta t, g_{\theta}}$ as a Neural ODE solver following
\vspace{-1ex}
\begin{equation}\label{eq:traj}
\begin{aligned}
    \bm{q}_{\theta}^{(t_{1})} = \phi_{T, g_\theta}(\bm{q}^{(t_{0})}) \coloneqq \Phi_{\Delta t, g_{\theta}} \circ \cdots \circ \Phi_{\Delta t, g_{\theta}}(\bm{q}^{(t_{0})}) = (\Phi_{\Delta t, g_{\theta}})^{\tau}(\bm{q}^{(t_{0})}).
\end{aligned}
\end{equation}
As a result, SEGNO offers a way to reuse existing Equiv-GNNs to build second-order Neural ODEs. 

However, a natural question arises: \textit{does a Neural ODE solver compromise the equivariance of backbone GNNs?} 
To address this question, we show that the equivariance property of backbone GNNs could be maintained in SEGNO.
\begin{proposition}\label{pro:equi_group}
Suppose the backbone GNN $f_{\theta}$ of SEGNO is $O(3)$-equivariant and translation-invariant, and the integrators' increment function $\mathcal{G}_{1}, \mathcal{G}_{2}$ are $O(3)$-equivariant, then the output trajectory $\bm{q}_{\theta}$ is $E(3)$-equivariant.
\end{proposition}
The proof and example illustrations are provided in Appendix~\ref{appendix:proof:prop3.1}, where we show that the general numerical integrators including symplectic Euler, Velocity Verlet, and Leapfrog satisfy Proposition~\ref{pro:equi_group}. 
Meanwhile, since Equiv-GNNs such as EGNN~\citep{satorras2021n} and GMN~\citep{huang2022equivariant} are built upon $O(3)$-equivariant and translation-invariant functions, SEGNO would preserve the same equivariant property as the backbone GNNs.

\section{SEGNO Analysis}

Besides equivariance, another essential problem is how SEGNO learns from observed system states. 
In this section, we examine the approximation quality of SEGNO.

\subsection{Solution Uniqueness}
\vspace{-1ex}
It is known that continuous dynamics have a unique solution under specific continuous conditions, according to Picard’s existence theorem~\citep{coddington1956theory}.
\begin{lemma}\label{thm:unique}
For the system $\bm{\ddot{q}}^{(t)}=f(\bm{q}^{(t)}, \bm{h})$, with given initial position $\bm{q}^{(t_0)}$ and velocity $\bm{\dot{q}}^{(t_0)}$, if $f$ is Lipschitz continuous, then this system has a unique solution $\bm{q}^{(t)}$ over the interval $t \in [t_{0}, t_{1}]$.
\end{lemma}
The proof is provided in Appendix~\ref{appendix:proof:thm4.1}. 
In addition, under the SEGNO framework, we can obtain the following results:
\begin{proposition}\label{pro:unique}
Given the same conditions as in Lemma~\ref{thm:unique}, if the realistic measurement on $\bm{q}^{(t_{1})}$ is given, there exists a $f_{{\theta}^{\ast}}$ obtained by minimizing the discrepancy between $\bm{q}_{{\theta}^{\ast}}^{(t_{1})}$ and $\bm{q}^{(t_{1})}$, such that $f_{{\theta}^{\ast}}(\bm{q}_{{\theta}^{\ast}}^{(t)}, \bm{h}) = f(\bm{q}^{(t)}, \bm{h})$ holds over the interval $t \in [t_{0}, t_{1}]$.
\end{proposition}
The detailed proof is given in Appendix~\ref{appendix:proof:prop4.2}. 
This proposition remarks that SEGNO can be trained in line with prior studies~\citep{satorras2021n,brandstetter2021geometric}. 
That is, given $t_0$ and $t_1$ as the input and target timesteps, SEGNO is trained to minimize the discrepancy between the exact and approximated positions:
\begin{equation}
\vspace{-0.5ex}
\begin{aligned}
    \mathcal{L}_{\text{train}}= \sum_{s\in\mathcal{D}_{\text{train}}}||\bm{q}_{\theta, s}^{(t_1)}-\bm{q}_{s}^{(t_1)}||_{2},
\end{aligned}
\label{eq:loss}
\end{equation}
where $\mathcal{D}_{\text{train}}$ denotes the training set.
With a slight abuse of notation, here we denote $\bm{q}_{\theta, s}^{(t_1)}, \bm{q}_{s}^{(t_1)}$ as the model prediction and actual position of trajectory $s$. 
If our learned model adequately approximates the system, Proposition~\ref{pro:unique} shows that it becomes possible for SEGNO to recover the latent trajectories of $[t_0, t_1]$ between input and output system states 
via Neural ODE. In contrast, in the absence of continuous constraints, multiple discrete trajectories can exist between the input and output, making it challenging for discrete models to accurately learn the underlying dynamic functions.

\vspace{-1ex}
\subsection{Approximation Ability}
\vspace{-1ex}
In this section, we theoretically show that SEGNO is capable of learning a trajectory that remains bounded to the real solution. 
To achieve this, we first derive the boundness of the learned second-order derivative $f_\theta$. 
Existing theoretical findings~\citep{zhu2022numerical} have illustrated that training using Neural ODE solvers results in a bounded approximation of their first-order derivatives. 
In this work, we generalize this theorem to the second-order Neural ODE employed with Euler integrator.

\vspace{-1ex}
\subsubsection{Bound of Acceleration}\label{sec:bound}
\vspace{-1ex}
To begin with, we first introduce relevant notations.
Let $\mathcal{B}(\bm{p}, r)\subset \mathbb{R}^{3}$ denotes a real ball of radius $r > 0$ centered at $\bm{p} \in \mathbb{R}^{3}$, for a subset $\mathcal{K} \subset \mathbb{R}^{3}$ and an analytic function $\eta$, we define
\vspace{-0.5ex}
\begin{equation}
\vspace{-0.5ex}
\begin{aligned}
    \mathcal{B}(\mathcal{K}, r) = \bigcup_{\bm{p} \in \mathcal{K}} \mathcal{B}(\bm{p}, r),\quad ||\eta||_{\mathcal{K}} = \sup_{\bm{p} \in \mathcal{K}} ||\eta(\bm{p})||_{\infty}.
\end{aligned}
\end{equation}
We denote a set of points on the exact trajectory associated with the time increment interval $[a, b]$ as
\vspace{-1ex}
\begin{equation}
\begin{aligned}
    V_{a}^{b} = \{ \phi_{t', g}(\bm{q}^{(t_{0})}) | a \leq t' \leq b \}.
\end{aligned}
\end{equation}
Given the time interval $T = t_{1} - t_{0}$ uniformly partitioned into $\tau$ sub-intervals of timestep $\Delta t$ (i.e., $t_0 < t_0 + \Delta t < \dots < t_1$), the following theorem holds:
\begin{theorem}\label{thm:local}
For $r_1, r_2 > 0$, the ODE solver is $\tau$ compositions of an Euler Integrator $\Phi_{\Delta t, g_{\theta}}$ with $\Delta t = T/\tau$, and we denote the supremum norm of the approximation error for the ODE solver within $\mathcal{B}(V_{0}^{2T}, r_{1})$ as
\vspace{-1ex}
\begin{equation*}
\vspace{-1ex}
\begin{aligned}
    \mathcal{L}_{0}^{2T} \coloneqq \frac{1}{T} || (\Phi_{\Delta t, g_{\theta}})^{\tau} - \phi_{T, g} ||_{\mathcal{B}(V_{0}^{2T}, r_{1})}.
\end{aligned}
\end{equation*}
Suppose that the target $f$ and the learned $f_{\theta}$ are analytic and bounded by $m_2$ on $\mathcal{B}(V_{0}^{2T}, r_2)$, and the target $g$ and the learned $g_{\theta}$ are analytic and bounded by $m_1$ on $\mathcal{B}(V_{0}^{2T}, r_1)$.
Then, there exist constants $T_0$ such that, if $0 < T < T_0$, $\forall t \in [t_{0}, t_{1}]$, 
\vspace{-1ex}
\begin{equation*}
\vspace{-1ex}
\begin{aligned}
    ||f_{\theta}(\bm{q}^{(t)}, \bm{h}) - f(\bm{q}^{(t)}, \bm{h})||_{\infty} \leq O(\Delta t + \frac{\mathcal{L}_{0}^{2T}}{\Delta t}).
\end{aligned}
\label{eq:local}
\end{equation*}
\end{theorem}
The proof is reported in Appendix~\ref{appendix:proof:thm4.3}. 
The assumption that the true $f,g$ and the learned $f_\theta,g_\theta$ are analytic and bounded within specific balls is valid and grounded in real-world physical observations.
Concretely, the trajectories of most dynamic physical systems exhibit smoothness, implying that there are no sudden changes in velocity or acceleration. 
This smooth property ensures that the system behaves in a physically realistic and predictable manner, which is a widely employed practice in control theory and dynamic systems analysis~\citep{zhu2022numerical}.

\vspace{-1ex}
\subsubsection{Bound of Trajectory}
\vspace{-1ex}
Based on Theorem~\ref{thm:local}, we can now analyze the error introduced by the Euler integrator. 
We introduce two metrics common in classical numerical analysis, namely, local and global truncation error~\citep{poli2020hypersolvers}. 
The local truncation error $\epsilon_{t+\Delta t}$ of SEGNO is defined as:
\vspace{-0.5ex}
\begin{equation}
\vspace{-0.5ex}
\begin{aligned}
    \epsilon_{t+\Delta t} = ||\bm{q}^{(t+\Delta t)} - \bm{q}^{(t)} - \bm{\dot{q}}^{(t)}\Delta t - f_{\theta}(\bm{q}^{(t)}, \bm{h})\Delta t^2||_2,
\end{aligned}
\label{eq:local}
\end{equation}
which represents the error for a single timestep. 
The global truncation error $\mathcal{E}_{t + k\Delta t}$ is defined as:
\vspace{-0.5ex}
\begin{equation}
\vspace{-0.5ex}
\begin{aligned}
    \mathcal{E}_{t + k\Delta t} = ||\bm{q}^{(t + k\Delta t)} - \bm{q}^{(t + k\Delta t)}_{\theta}||_2,
\end{aligned}
\label{eq:global}
\end{equation}
which denotes the error accumulated in the first $k$ steps. Then we have
\begin{corollary}\label{thm:error}
Given the same conditions as in Theorem~\ref{thm:local}, if our learned model adequately approximates the system and $g_{\theta}$ and $f_{\theta}$ satisfy the Lipschitz continuity, then, $\forall t \in [t_{0}, t_{1}]$ and $k = 1, \cdots, \tau$, the local truncation error $\epsilon_{t+\Delta t}$ and the global truncation error $\mathcal{E}_{t+k\Delta t}$ are the order of $O(\Delta t^{2})$ and $O(\Delta t)$, respectively.
\end{corollary}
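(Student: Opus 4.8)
The plan is to establish the two truncation-error bounds using Theorem~\ref{thm:local} together with standard arguments from the numerical analysis of one-step methods. First I would control the \emph{local} truncation error $\epsilon_{t+\Delta t}$ defined in Eq.~\ref{eq:local}. Adding and subtracting the true one-step update, write
\begin{equation*}
\epsilon_{t+\Delta t} = \bigl\| \bigl(\bm{q}^{(t+\Delta t)} - \bm{q}^{(t)} - \bm{\dot{q}}^{(t)}\Delta t - f(\bm{q}^{(t)},\bm{h})\Delta t^2\bigr) + \bigl(f(\bm{q}^{(t)},\bm{h}) - f_{\theta}(\bm{q}^{(t)},\bm{h})\bigr)\Delta t^2 \bigr\|_2 .
\end{equation*}
The first parenthesis is the classical Taylor remainder of the exact trajectory: since $f$ is analytic (hence $\bm{q}^{(t)}$ is $C^\infty$ in time on $[t_0,t_1]$), this term is $O(\Delta t^3)$ by Taylor's theorem, which is certainly $O(\Delta t^2)$. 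For the second parenthesis I would invoke Theorem~\ref{thm:local}: with the loss $\mathcal{L}_{\text{train}}$ adequately minimized we have $\mathcal{L}=\frac1T\|\bm{q}^{(t_1)}_{\theta}-\bm{q}^{(t_1)}\|_\infty$ small, and optimizing the timestep balance $\Delta t + \mathcal{L}/\Delta t$ (e.g.\ taking $\Delta t \sim \sqrt{\mathcal{L}}$) gives $\|f_\theta - f\|_\infty = O(\Delta t)$ in the regime of interest; hence $\|f-f_\theta\|\Delta t^2 = O(\Delta t^3)$. Combining, $\epsilon_{t+\Delta t}=O(\Delta t^2)$, where I am using the convention that the $O(\Delta t)$ factor coming from $f_\theta$'s approximation error is absorbed into the leading constant since it is in fact $o(1)$.

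Next I would bound the \emph{global} truncation error $\mathcal{E}_{t+k\Delta t}$ of Eq.~\ref{eq:global} by the usual accumulation-of-local-errors recursion. Define $e_k = \bm{q}^{(t+k\Delta t)} - \bm{q}^{(t+k\Delta t)}_\theta$ and similarly $\dot e_k$ for the velocities. Subtracting the symplectic-Euler update Eq.~\ref{eq:int} from the exact increment over one step and using the mean-value / Lipschitz bound on $g_\theta$ (the hypothesis that $g_\theta$ is Lipschitz, with constant $L_g$, say) together with the relation $\bm{\dot q}^{(t)} = g(\bm{q}^{(t)},\bm{h})$ from Eq.~\ref{eq:sec_1st} and its approximate analogue Eq.~\ref{eq:1st_derive}, one obtains a recursion of the shape
\begin{equation*}
\|e_{k+1}\| \le (1 + C\Delta t)\,\|e_k\| + \epsilon_{k+1},
\end{equation*}
with $C$ depending on $L_g$ (and, via Theorem~\ref{thm:local}, on the bound between $g$ and $g_\theta$ — which I would again route through the $O(\Delta t + \mathcal{L}/\Delta t)$ estimate so that it contributes only lower-order terms). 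Iterating the discrete Grönwall inequality over the $k \le \tau = T/\Delta t$ steps and using $\epsilon_j = O(\Delta t^2)$ from the first part yields
\begin{equation*}
\|e_k\| \le \frac{(1+C\Delta t)^k - 1}{C\Delta t}\,\max_j \epsilon_j \le \frac{e^{CT}-1}{C}\cdot \frac{\max_j \epsilon_j}{\Delta t} = O(\Delta t),
\end{equation*}
since one factor of $\Delta t$ is lost in summing $\tau = T/\Delta t$ local errors. This gives $\mathcal{E}_{t+k\Delta t}=O(\Delta t)$.

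The main obstacle is handling the symplectic (staggered) nature of the integrator cleanly in the Grönwall step: unlike plain forward Euler, Eq.~\ref{eq:int} updates velocity with accelerations at $t_0+k\Delta t$ for $k=0,\dots,\tau-1$ but position with velocities at $k=1,\dots,\tau$, so the position and velocity error recursions are coupled with a one-step offset and I must track the pair $(e_k,\dot e_k)$ jointly, bounding the $2\times 2$ amplification matrix's operator norm by $1+C\Delta t$ rather than treating a scalar recursion. A secondary subtlety is that $f_\theta$ is only guaranteed analytic and bounded on the balls $\mathcal{B}(\bm{q}_i,r_i)$, so I must assume (or argue, via boundedness of the true trajectory and smallness of $\mathcal{E}$) that all iterates $\bm{q}^{(t+k\Delta t)}_\theta$ remain inside these balls for the Lipschitz and Theorem~\ref{thm:local} estimates to apply along the whole rollout; I would state this as part of the "adequately minimized loss" regime, consistent with how Theorem~\ref{thm:local} is already phrased.
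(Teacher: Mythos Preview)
Your overall strategy---Taylor-expand for the local error, then a Lipschitz/Gr\"onwall recursion for the global error---is exactly what the paper does. However, your local-error step contains a slip: the term
\[
\bm{q}^{(t+\Delta t)} - \bm{q}^{(t)} - \bm{\dot{q}}^{(t)}\Delta t - f(\bm{q}^{(t)},\bm{h})\,\Delta t^{2}
\]
is \emph{not} $O(\Delta t^{3})$. Taylor's theorem gives $\bm{q}^{(t+\Delta t)} - \bm{q}^{(t)} - \bm{\dot{q}}^{(t)}\Delta t = \tfrac{1}{2}f(\bm{q}^{(t)})\Delta t^{2} + O(\Delta t^{3})$, so after subtracting $f\,\Delta t^{2}$ (the one-step symplectic-Euler update of Eq.~\ref{eq:int} carries coefficient $1$, not $\tfrac{1}{2}$) you are left with $-\tfrac{1}{2} f(\bm{q}^{(t)})\Delta t^{2} + O(\Delta t^{3})$, which is only $O(\Delta t^{2})$. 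The paper makes precisely this split: it writes $\tfrac{1}{2} f\,\Delta t^{2} - f_{\theta}\,\Delta t^{2} = \tfrac{1}{2}(f-f_{\theta})\Delta t^{2} - \tfrac{1}{2} f_{\theta}\,\Delta t^{2}$ and bounds the last piece by $\tfrac{1}{2} m_{2}\Delta t^{2}$ using the boundedness hypothesis on $f_{\theta}$. Your conclusion $\epsilon_{t+\Delta t}=O(\Delta t^{2})$ is therefore still correct, but for the wrong reason: the quantity you labeled ``classical Taylor remainder'' and called $O(\Delta t^{3})$ is in fact the \emph{dominant} $O(\Delta t^{2})$ contribution, while Theorem~\ref{thm:local} is only needed to ensure the $(f-f_{\theta})\Delta t^{2}$ piece does not spoil this order.

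For the global error, the paper does not track the coupled pair $(e_{k},\dot e_{k})$ or bound a $2\times 2$ amplification matrix. It uses the first-order reduction $\bm{\dot q}=g(\bm q)$ of Eq.~\ref{eq:sec_1st} together with its learned analogue $\bm{\dot q}_{\theta}=g_{\theta}(\bm q_{\theta})$ of Eq.~\ref{eq:1st_derive} to treat the position update as a one-step method with increment $g_{\theta}$, yielding directly the scalar recursion $\|\mathcal{E}_{t+(k+1)\Delta t}\|_{2}\le (1+L\Delta t)\|\mathcal{E}_{t+k\Delta t}\|_{2}+O(\Delta t^{2})$ and then summing the geometric series. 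So the ``main obstacle'' you flag regarding the staggered velocity/position update is bypassed rather than confronted: the Lipschitz hypothesis is placed on $g_{\theta}$ precisely so that the recursion closes in the position error alone, and no joint $(e_{k},\dot e_{k})$ analysis is required.
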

The proof is reported in Appendix~\ref{appendix:proof:coro4.4}. 
Corollary~\ref{thm:error} shows that for the prediction of $\tau$-th iteration, its error depends on the chosen $\Delta t$. 
These statements imply that SEGNO can be trained by minimizing Eq.~\ref{eq:loss} and generalize to other timesteps.

Though there exist studies~\citep{sanchez2020learning,bishnoi2022enhancing} that employ accelerations to train models, we remark that SEGNO is different from them and better. 
As Theorem~\ref{thm:local} states, we aim to learn the \textbf{instantaneous} acceleration of each timestep by minimizing the position discrepancy, while previous works use the \textbf{average} acceleration to train the model and then adopt semi-implicit Euler method to update the next state as well. 
The average acceleration is computed from the observed trajectory (i.e., $\bm{\ddot{q}}^{(t_1)}=\bm{q}^{(t_2)} - 2\bm{q}^{(t_1)} + \bm{q}^{(t_0)}$). 
Thus, we can find that their local truncation error is $O(T)$ even if their loss is zero, while that of SEGNO is $O(\Delta t^{2}) (T=\tau\Delta t)$.
SEGNO theoretically achieves lower error without the need to obtain average acceleration by differentiating future positions.

\begin{wrapfigure}[19]{r}{0.45\textwidth}
    \centering
    \includegraphics[width=0.45\textwidth]{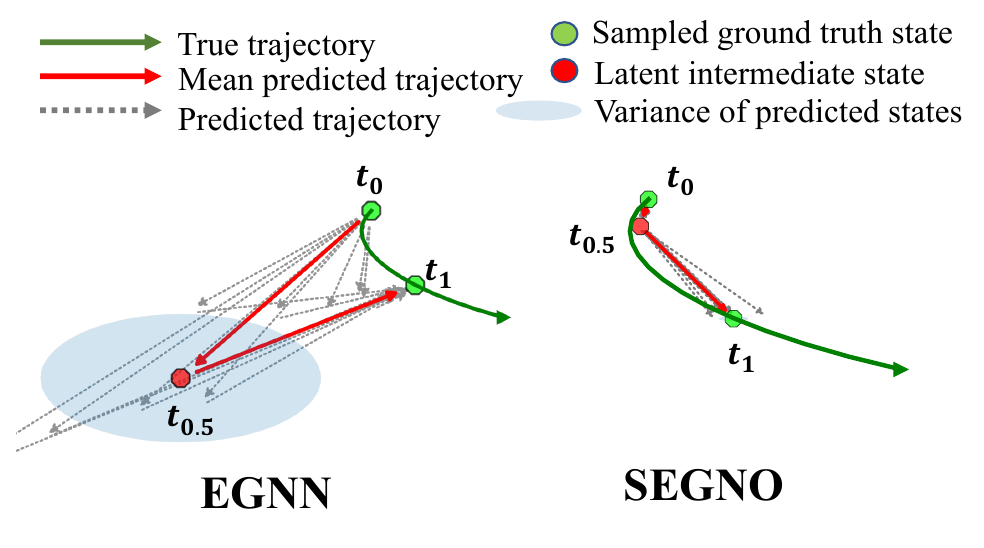}
    \vspace{-5ex}
    \caption{Illustration of learned trajectories from EGNN (left) and SEGNO (right). They are trained to predict the positions of N-body charged systems after 1000ts (See Section~\ref{sec:nbody}). The green, red, and dotted grey lines are the true, average, and predicted trajectories. $t_0, t_1$ is the observed states. $t_{0.5}$ is the predicted latent state \eat{from the average trajectory.} The blue area denotes the variance. \eat{Zoom in to view the variance of SEGNO.}}
    \label{fig:illu}
\end{wrapfigure}
\paragraph{Empirical verification}
To verify our theoretical findings, we train multiple models of  EGNN~\citep{satorras2021n} and SEGNO with EGNN backbone, utilizing different random seeds on a 3-body system with two adjacent states $\bm{q}^{(t_0)}, \bm{q}^{(t_1)}$. 
We derive the intermediate state $\bm{q}^{(t_{0.5})}$ from their intermediate layers/iterations, which serve as estimations of latent motion trajectories between the two states. 
In Figure~\ref{fig:illu}, we visualize all predicted trajectories (in dotted grey line), the mean trajectory (in red line), and the mean and variance of predicted states. 
We can observe that EGNN predictions exhibit high error (red circle) and large variance (blue area) at the intermediate state $\bm{q}^{(t_{0.5})}$, indicating that the discrete models are incapable of learning the underlying real dynamics from observed system states. Conversely, the learned trajectory of SEGNO demonstrates a significantly lower error and small variance. The additional numerical comparisons are shown in Appendix~\ref{appendix:exp:acc}.

\vspace{-1ex}
\section{Experiments}
\vspace{-1ex}
\paragraph{Datasets}
To validate the effectiveness of SEGNO, we first evaluate it on two simulated N-body systems, namely \emph{Charged} particles and \emph{Gravity} particles, which are driven by electromagnetic~\citep{kipf2018neural} and gravitational forces~\citep{brandstetter2021geometric} between each pair of particles, respectively.
Subsequently, we compare our model with state-of-the-art models in two challenging datasets: 
(1) \textbf{MD22}~\citep{chmiela2023accurate} which includes the trajectories of seven large and different types of molecules generated via molecular dynamics simulation; 
(2) \textbf{CMU motion capture}~\citep{cmu2003motion}, which contains various trajectories of human motions in real-world. Note that all these dataset are symmetric, N-body systems and MD22 are $E(3)$-equivariant, and CMU motion capture is $E(2)$-equivariant\footnote{Symmetry is partially broken by gravity.}.

\paragraph{Baselines}
We compare SEGNO with various models: 
(1) Equivariant models including Radial Field~\citep{kohler2019equivariant}, TFN~\citep{thomas2018tensor}, SE(3) Transformer~\citep{fuchs2020se}, EGNN~\citep{satorras2021n}, GMN~\citep{huang2022equivariant}, SEGNN~\citep{brandstetter2021geometric}; 
(2) Non-equivariant models including GNN, Linear model, and Graph Neural Ordinary Differential Equation (GDE)~\citep{DBLP:journals/corr/abs-1911-07532}.

\vspace{-1ex}
\subsection{Simulated N-body system}\label{sec:nbody}
\vspace{-1ex}
\paragraph{Implementation details} 
We build upon the experimental setting presented in \citep{satorras2021n} where the task is to estimate all particle positions after a fixed timestep. 
Each system consists of 5 particles, each with initial positions, velocities, and attributes like positive/negative charge or mass. 
The graph is fully connected and the initial velocity norm is provided as additional input node features. 
We employ EGNN~\citep{satorras2021n} as the GNN backbone ($f_\theta$) of SEGNO in these settings. 
For each system, besides the common setting~\citep{satorras2021n,brandstetter2021geometric} with 1000 timesteps (1000 ts), we introduce two extra targets—1500 ts and 2000 ts—which render the learning of latent trajectories more challenging.  
Other settings including the hyper-parameters are introduced in Appendix~\ref{appendix:exp:impldetails}.

\begin{table}[t]
    \setlength{\tabcolsep}{5mm}
    \centering 
    \caption{
    Mean squared error ($\times 10^{-2}$) of the N-body system.
    Bold font indicates the best result and underline is the strongest baseline. Results averaged across 5 runs. We report both mean and standard deviation in the table.  
    }
    \vspace{0.5ex}
    \label{table:nbody}
    \resizebox{0.9\textwidth}{!}{
    \begin{tabular}{crrrrrr}
    \toprule
    \multirow{2}{*}{\textbf{Method}}  & \multicolumn{3}{c}{\textbf{Charged}} &  \multicolumn{3}{c}{\textbf{Gravity}} \\
     &1000 ts & 1500 ts & 2000 ts & 1000 ts & 1500 ts & 2000 ts \\
    \midrule
    \textbf{Linear} & 6.830\tiny{$\pm$ 0.016} & 20.012\tiny{$\pm$ 0.029} & 39.513\tiny{$\pm$ 0.061} & 7.928\tiny{$\pm$ 0.001} & 29.270\tiny{$\pm$ 0.003} & 58.521\tiny{$\pm$ 0.003}\\
    \textbf{GNN} & 1.077\tiny{$\pm$ 0.004} & 5.059\tiny{$\pm$ 0.250} & 10.591\tiny{$\pm$ 0.352} & 1.400\tiny{$\pm$ 0.071} & 4.691\tiny{$\pm$ 0.288} & 10.508\tiny{$\pm$ 0.432}  \\
    \textbf{GDE} & 1.285\tiny{$\pm$ 0.074} & 4.026\tiny{$\pm$ 0.164} & 8.708\tiny{$\pm$ 0.145} & 1.412\tiny{$\pm$ 0.095} & 2.793\tiny{$\pm$ 0.083} & 6.291\tiny{$\pm$ 0.153} \\
    \textbf{TFN} &  1.544\tiny{$\pm$ 0.231} & 11.116\tiny{$\pm$ 2.825} & 23.823\tiny{$\pm$ 3.048} & 3.536\tiny{$\pm$ 0.067} & 37.705\tiny{$\pm$ 0.298} & 73.472\tiny{$\pm$ 0.661} \\
    \textbf{SE(3)-Tr.} & 2.483\tiny{$\pm$ 0.099} & 18.891\tiny{$\pm$ 0.287} & 36.730\tiny{$\pm$ 0.381} & 4.401\tiny{$\pm$ 0.095} & 52.134\tiny{$\pm$ 0.898} & 98.243\tiny{$\pm$ 0.647}    \\
    \textbf{Radial Field} & 1.060\tiny{$\pm$ 0.007} & 12.514\tiny{$\pm$ 0.089} & 26.388\tiny{$\pm$ 0.331} & 1.860\tiny{$\pm$ 0.075}   & 7.021\tiny{$\pm$ 0.150} & 16.474\tiny{$\pm$ 0.033}   \\
    \textbf{EGNN} & 0.711\tiny{$\pm$ 0.029} & 2.998\tiny{$\pm$ 0.089} & 6.836\tiny{$\pm$ 0.093} & 0.766\tiny{$\pm$ 0.011} & 3.661\tiny{$\pm$ 0.055} & 9.039\tiny{$\pm$ 0.216}  \\
    \textbf{GMN} & 0.824\tiny{$\pm$ 0.032} & 3.436\tiny{$\pm$ 0.156} & 7.409\tiny{$\pm$ 0.214}& 0.620\tiny{$\pm$ 0.043} & 2.801\tiny{$\pm$ 0.194} & 6.756\tiny{$\pm$ 0.427}  \\
    \textbf{SEGNN} & \underline{0.448}\tiny{$\pm$ 0.003} & \underline{2.573}\tiny{$\pm$ 0.053} & \underline{5.972}\tiny{$\pm$ 0.168} & \underline{0.471}\tiny{$\pm$ 0.026} & \underline{2.110}\tiny{$\pm$ 0.044} & \underline{5.819}\tiny{$\pm$ 0.335} \\
    \midrule
    \textbf{SEGNO} & \textbf{0.433}\tiny{$\pm$ 0.013}  & \textbf{1.858}\tiny{$\pm$ 0.029} & \textbf{4.285}\tiny{$\pm$ 0.049} & \textbf{0.338}\tiny{$\pm$ 0.027} & \textbf{1.362}\tiny{$\pm$ 0.077} & \textbf{4.017}\tiny{$\pm$ 0.087} \\
    \bottomrule
    \end{tabular}
    }
    \vspace{-4ex}
\end{table}

\paragraph{Results} 
Table~\ref{table:nbody} depicts the overall results of all models on two datasets. 
It is evident that SEGNO, equipped solely with EGNN, outperforms all baselines across all datasets and settings. 
Specifically, compared to the best baseline SEGNN, the average error improvement ($\times 10^{-2}$) on Charged and Gravity datasets is $0.805$ and $0.894$ respectively, demonstrating significant improvement. 
Additionally, as timesteps increase, the performances of baselines largely drop while SEGNO still can model the latent trajectory. 
Thus, SEGNO's performance improvement becomes more pronounced.
For example, compared to the best baseline SEGNN, the relative improvement in Gravity datasets increases from $28.24\%$ at 1000 time steps to $35.45\%$ at 1500 time steps, demonstrating the strong generalization ability of SEGNO.

\subsubsection{Ablation Study}
In this section, we conduct several ablation experiments on simulated N-body systems to scrutinize our model design, and provide empirical validation for our theoretical findings.

\begin{table}[t]
  \centering
  \caption{Ablation studies ($\times 10^{-2}$) on simulated N-body systems. Results averaged across 5 runs.}
  \label{table:ab}
  \resizebox{0.8\textwidth}{!}{
  \begin{tabular}{cccccccc}
    \toprule
    \multirow{2}{*}{\textbf{Order}}  & \multirow{2}{*}{\textbf{Continuity}}  &\multicolumn{3}{c}{\textbf{Charged}} &  \multicolumn{3}{c}{\textbf{Gravity}} \\
     & & 1000 ts & 1500 ts & 2000 ts & 1000 ts & 1500 ts & 2000 ts \\
    \midrule
     First & Discrete & 0.798\tiny{$\pm$ 0.099} & 2.215\tiny{$\pm$ 0.058} & 4.996\tiny{$\pm$ 0.148} & 0.466\tiny{$\pm$ 0.027} & 2.342\tiny{$\pm$ 0.424} & 5.501\tiny{$\pm$ 0.294}\\
     Second & Discrete & 0.738\tiny{$\pm$ 0.026} & 2.125\tiny{$\pm$ 0.051} & 4.948\tiny{$\pm$ 0.198} & 0.362\tiny{$\pm$ 0.010} & 2.249\tiny{$\pm$ 0.591} & 5.343\tiny{$\pm$ 0.263}\\
     First & Continuous & \underline{0.537}\tiny{$\pm$ 0.045} & \underline{1.977}\tiny{$\pm$ 0.056} & \underline{4.405}\tiny{$\pm$ 0.159} & \underline{0.352}\tiny{$\pm$ 0.024}  &  \underline{1.468}\tiny{$\pm$ 0.040} & \underline{4.895}\tiny{$\pm$ 0.102}\\
     Second & Continuous & \textbf{0.433}\tiny{$\pm$ 0.013}& \textbf{1.858}\tiny{$\pm$ 0.029} & \textbf{4.285}\tiny{$\pm$ 0.049} & \textbf{0.338}\tiny{$\pm$ 0.027} & \textbf{1.362}\tiny{$\pm$ 0.077} & \textbf{4.017}\tiny{$\pm$ 0.087} \\
    \bottomrule
  \end{tabular}
 }
\end{table}

\paragraph{Physical inductive biases in SEGNO}
To validate the effect of inductive biases incorporated in SEGNO, we construct three variants of SEGNO, each featuring a unique combination of physical inductive biases. 
Table~\ref{table:ab} reports the results where the term `First' indicates that the model employ $f_\theta$ to parameterize velocity rather than accelerations. 
`Discrete' implies that SEGNO does not share the parameters across iterations, akin to discrete models. 
The original version of SEGNO is denoted as `Second' and `Continuous'. From Table~\ref{table:ab}, we can observe that across all scenarios, continuous models consistently surpass discrete models and the second-order bias consistently enhances the performance of first-order models. 
These findings serve to corroborate the efficacy of SEGNO's construction and further emphasize the significant advantages of integrating physical inductive biases into the learning process of dynamics. 

\begin{wrapfigure}[10]{r}{0.3\textwidth}
    \centering
    \vspace{-4ex}
    \includegraphics[width=0.3\textwidth]{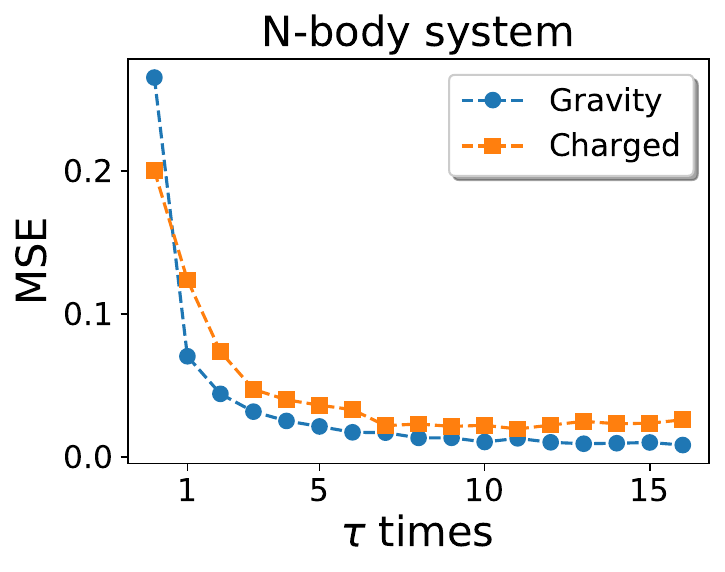}
    \vspace{-4ex}
    \caption{Effects of iteration number $\tau$.}\label{fig:tau}
\end{wrapfigure}

\paragraph{Effects of iteration times $\tau$} 
We further investigate the impact of the iteration number $\tau$ in SEGNO to empirically verify Corollary~\ref{thm:error}.
Given that the target time interval $T$ remains constant, a larger iteration number $\tau$ indicates a smaller interval $\Delta t$.
The results are displayed in Figure~\ref{fig:tau}. 
It is obvious that superior outcomes can be attained by opting for a smaller interval $\Delta t$.
Additionally, the performance would not increase after reaching a sufficient number of iterations, which is approximately 10 for both datasets. 
As per Theorem~\ref{thm:local}, the plateau in performance improvement can be attributed to learning errors which is related to the representative ability of GNN backbone in SEGNO.

\vspace{-1ex}
\subsection{Molecular Dynamic}
\vspace{-1ex}
\paragraph{Implementation details}
We use the atomic number and initial velocity norm as input node features. 
Two atoms are neighbors if their distance is less than a threshold. 
Our goal is to predict atom positions after 10 data frames. 
We run experiments on NVIDIA RTX A6000 GPU. 
TFN and SE(3)-Transformer run out of memory, thus we omit results of SE(3)-Transformer and part of TFN results. 
We use EGNN~\citep{satorras2021n} as the GNN backbone ($f_\theta$) of SEGNO.
Other settings including the hyper-parameters are introduced in Appendix~\ref{appendix:exp:impldetails}.

\vspace{-1ex}
\paragraph{Results} 
Table~\ref{table:md22} summarizes the results of all models. 
It is evident that SEGNO outperforms the baselines across 7 molecules, even on the Double-walled Nanotube comprising 370 atoms, supporting the general effectiveness of encoding physical inductive biases. 
It is worth noting that SEGNO utilizes GMN as its backbone. In comparison to the original GMN, the errors are considerably diminished in all instances. 
The average relative improvement of SEGNO over GMN on 7 molecules is $15.6\%$.
Such results demonstrate the effectiveness of enhancing physical inductive biases on equivariant GNNs in empirical applications.

\begin{table}[t]
    \setlength{\tabcolsep}{3mm}
    \centering 
    \caption{
    Mean squared error ($\times 10^{-3}$) on MD22 dataset. Bold font indicates the best result and Underline is the strongest baseline. Results averaged across 3 runs.  
    }
    \label{table:md22}
    \resizebox{\textwidth}{!}{
    \begin{tabular}{ccccccccc}
    \toprule
    \textbf{Molecule} & \textbf{Type} & \textbf{\# Atom} & \textbf{TFN} & \textbf{RF} & \textbf{EGNN}  & \textbf{GMN} & \textbf{SEGNO} \\
    \midrule
    \textbf{Ac-Ala3-NHMe} & Protein & 42 & 1.434\tiny{$\pm$ 0.716} & 1.340\tiny{$\pm$ 0.072} & \underline{0.539}\tiny{$\pm$ 0.101} & 0.960\tiny{$\pm$ 0.157} & \textbf{0.418}\tiny{$\pm$ 0.038} \\
    \textbf{DHA} & Lipid & 56  & 0.577\tiny{$\pm$ 0.075} & 1.487\tiny{$\pm$ 0.068} & 0.782\tiny{$\pm$ 0.035} & \underline{0.453}\tiny{$\pm$ 0.125} & \textbf{0.370}\tiny{$\pm$ 0.020} \\
    \textbf{AT-AT} & Nucleic acid & 60 & 1.407\tiny{$\pm$ 0.894} & 1.270\tiny{$\pm$ 0.067}  & 0.583\tiny{$\pm$ 0.053} & \underline{0.568}\tiny{$\pm$ 0.016} & \textbf{0.440}\tiny{$\pm$ 0.096} \\
    \textbf{Stachyose} & Carbohydrate & 87  & -- & 2.069\tiny{$\pm$ 0.074}  & 0.629\tiny{$\pm$ 0.073} & \underline{0.583}\tiny{$\pm$ 0.039} & \textbf{0.548}\tiny{$\pm$ 0.006} \\
    \textbf{AT-AT-CG-CG} & Nucleic acid & 118  & -- & 2.596\tiny{$\pm$ 1.282}  & 0.609\tiny{$\pm$ 0.031} & \underline{0.581}\tiny{$\pm$ 0.099} & \textbf{0.394}\tiny{$\pm$ 0.033} \\
    \textbf{Buckyball Catcher} & Supramolecule & 148  & -- & 0.440\tiny{$\pm$ 0.013} & 0.554\tiny{$\pm$ 0.093} & \underline{0.309}\tiny{$\pm$ 0.092} & \textbf{0.199}\tiny{$\pm$ 0.020} \\
    \textbf{Double-walled Nanotube} & Supramolecule & 370  & -- & 0.382\tiny{$\pm$ 0.001} & 0.349\tiny{$\pm$ 0.261} & \underline{0.321}\tiny{$\pm$ 0.065} & \textbf{0.225}\tiny{$\pm$ 0.008} \\
    \bottomrule
    \end{tabular}
    }
    \vspace{-3ex}
\end{table}

\begin{table}[t]
    \setlength{\tabcolsep}{3mm}
    \centering 
    \caption{
    Mean squared error ($\times 10^{-2}$) on CMU motion capture dataset. Bold font indicates the best result and Underline is the strongest baseline. Results averaged across 3 runs.  
    }
    \label{table:motion}
    \resizebox{\textwidth}{!}{
    \begin{tabular}{ccccccc|c}
    \toprule
    \textbf{Model} & \textbf{TFN} & \textbf{SE(3)-Tr.}  & \textbf{RF} & \textbf{EGNN}  & \textbf{GMN} & \textbf{SEGNO} & \textbf{Abs. Imp.}\\
    \midrule
    \textbf{$30$} \footnotesize{ts} & 24.932\tiny{$\pm$ 1.023} & 24.655\tiny{$\pm$ 0.870} & 149.459\tiny{$\pm$ 0.750} & 24.013\tiny{$\pm$ 0.462} & \underline{16.005}\tiny{$\pm$ 0.386} & \textbf{14.462}\tiny{$\pm$ 0.106} & 1.543 \\
    \textbf{$40$} \footnotesize{ts} & 49.976\tiny{$\pm$ 1.664} & 44.279\tiny{$\pm$ 0.355} & 306.311\tiny{$\pm$ 1.100} & 39.792\tiny{$\pm$ 2.120} & \underline{38.193}\tiny{$\pm$ 0.697} & \textbf{22.229}\tiny{$\pm$ 1.489} & 15.964 \\
    \textbf{$50$} \footnotesize{ts} & 73.716\tiny{$\pm$ 4.343} & 68.796\tiny{$\pm$ 1.159} & 549.476\tiny{$\pm$ 3.461} & 50.930\tiny{$\pm$ 2.675} & \underline{47.883}\tiny{$\pm$ 0.599} & \textbf{29.264}\tiny{$\pm$ 0.946} & 18.619 \\
    \bottomrule
    \end{tabular}
    }
    \vspace{-2ex}
\end{table}

\begin{figure}[b]
    \centering
    \vspace{-6ex}
    \includegraphics[width=0.8\linewidth]{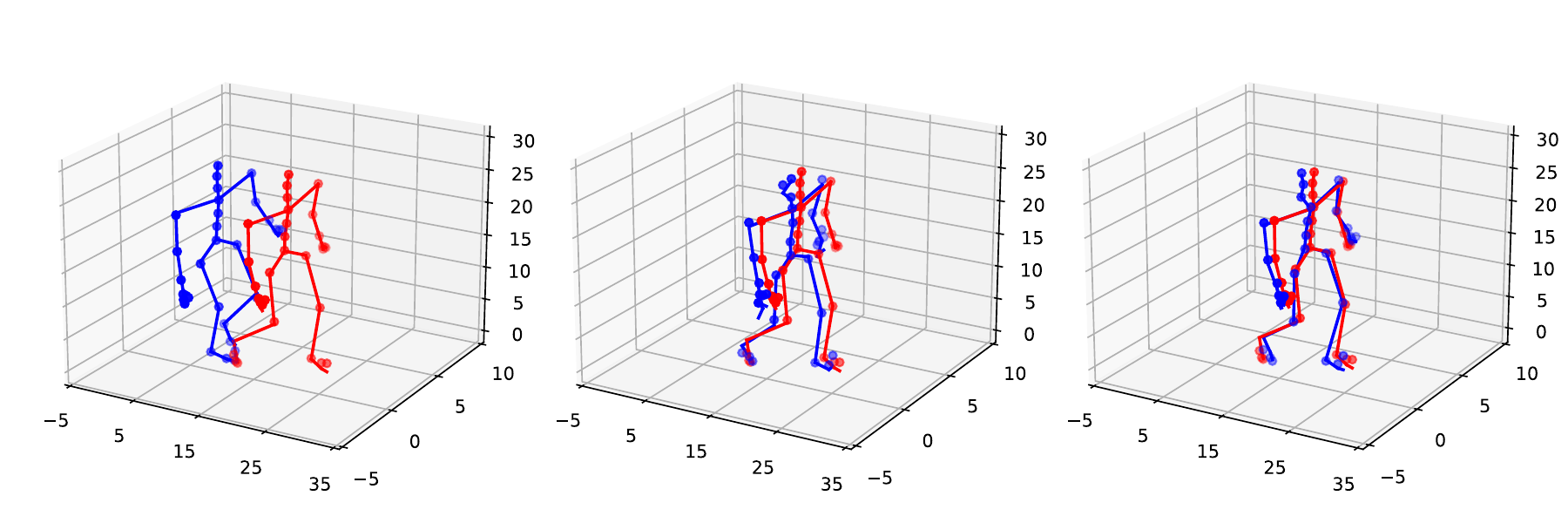}
    \vspace{-2ex}
\caption{Visualization of Motion Capture with 50 ts. Left to Right: initial position, GMN, SEGNO (all in \textcolor{blue}{blue}). Ground truths are in \textcolor{red}{red}. }
    \label{fig:motion}
    \vspace{-4ex}
\end{figure}

\vspace{-1ex}
\subsection{CMU Motion capture}
\vspace{-1ex}
\paragraph{Implementation details} 
CMU Motion Capture~\citep{cmu2003motion} contains the trajectories of human motion under several scenarios. 
Following previous studies~\citep{kipf2018neural}, we focus on the walking motion of a single object (subject \#35). 
The goal of this task is to predict the data frame after 30 timesteps. 
Similar to N-body systems, we broaden our assessment scope to include scenarios with intervals of 40 ts and 50 ts, in addition to the default settings with 30 ts. 
We use GMN as the backbone of SEGNO.
The norm of velocity and the coordinates of the gravity axis (z-axis) are set as node features to represent the motion dynamics.
Note that the human body operates through joint interactions, two joints are 1-hop neighbors if they are connected naturally and we augment the edges with 2-hop neighbors. 
Other settings are introduced in Appendix~\ref{appendix:exp:impldetails}.

\vspace{-1ex}
\paragraph{Results} 
Table~\ref{table:motion} reports the performance of SEGNO and various compared models.
We can observe that SEGNO outperforms all baseline models by a significant margin across all scenarios.
Notably, the improvements are more pronounced in long-term simulations, with SEGNO achieving 18.619 $\times$ 10$^{-2}$ lower MSE than the runner-up model GMN. 
To gain further insights into the superior performance of SEGNO, we illustrate the predicted motion of GMN and SEGNO in Figure~\ref{fig:motion}. 
Interestingly, it can be observed that the predictions of GMN appear to lag behind the ground truths, while SEGNO demonstrates a closer match.
This discrepancy may be attributed to the lack of constraints imposed by modeling latent trajectories. We provide more visualizations in Appendix~\ref{appendix:cmu}.

\vspace{-1ex}
\section{Related Work}
\vspace{-1ex}
Graph neural networks~\citep{li2019semi,tang2022rethinking,gao2023hierarchical,tang2023gadbench} have shown promising performance in learning complex physical dynamics. 
IN~\citep{battaglia2016interaction}, NRI~\citep{kipf2018neural}, and HRN~\citep{mrowca2018flexible} are pioneer works that model physical objects and relations as graphs and learn their interaction and evolution. 
Recently, researchers have taken into account the underlying physical symmetry of systems. 
TFN~\citep{thomas2018tensor} and SE(3) Transformer~\citep{fuchs2020se} employ spherical harmonics to construct models with 3D rotation equivariance in the Euclidean group for higher-order geometric representations. 
LieConv~\citep{finzi2020generalizing} and LieTransformer~\citep{hutchinson2021lietransformer} leverage the Lie convolution to extend equivariance on Lie groups.
SEGNN~\citep{brandstetter2021geometric} proposes to use steerable vectors and their equivariant transformations to represent and process node features.
In addition to these methods, a series of studies~\citep{satorras2021n,huang2022equivariant,schutt2021equivariant,wang2023spatial} apply scalarization techniques to introduce equivariance into the message-passing process in GNNs.
Nevertheless, these methods model dynamics in physical systems solely by learning direct mappings between discrete states.
They ignore the second-order and continuous nature of observed trajectories, leading to suboptimal generalization performance.

Another research line~\citep{thangamuthu2022unravelling} leverages ODE and Hamiltonian mechanics to capture the interactions in the systems such as Lagrangian Neural Networks (LNN)~\citep{DBLP:conf/nips/FinziWW20,DBLP:conf/iclr/LutterRP19,DBLP:journals/mlst/BhattooRK23}, Hamiltonian neural networks (HNN)~\citep{DBLP:conf/nips/GreydanusDY19}, and Neural ODE~\citep{DBLP:conf/nips/ChenRBD18,DBLP:conf/iclr/GruverFSW22,norcliffe2020second}. 
Recent studies have also enhanced these methods with GNNs.
GDE~\citep{DBLP:journals/corr/abs-1911-07532} and HOGN~\citep{DBLP:journals/corr/abs-1909-12790} combine GNNs with a differentiable ODE integrator.
GNODE~\citep{bishnoi2022enhancing} incorporates a graph-based Neural ODE with additional inductive biases, such as Newton’s third law. 
Compared with these works, we provide theoretical insights that show a second-order Graph Neural ODE can obtain bounded error of instantaneous acceleration and position through minimizing position discrepancy. 
These findings are further validated in complex applications including molecular and human motion dynamics.
In particular, GNS~\citep{sanchez2020learning,DBLP:conf/iclr/PfaffFSB21} also optimizes models via accelerations. 
However, they only learn the average accelerations that are calculated from the observed trajectories. 
In contrast, our study focuses on learning instantaneous accelerations and we show it theoretically achieves lower errors than GNS.
Furthermore, it is worth mentioning that GNS does not consider equivariance, which is a critical inductive bias that captures symmetries in physical systems. 
We provide an experimental comparison between them in Appendix~\ref{appendix:exp:gns}.

Besides the above studies, a series of works have combined GNNs and the first-order Neural ODE to learn multi-agent systems (e.g., social networks), including CG-ODE~\citep{huang2021coupled}, LG-ODE~\citep{huang2020learning}, and HOPE~\citep{luo2023hope}. \eat{whereas our paper is concerned with the prediction of multiple interacting particles governed by ordinary differential equations.} 
However, due to the application difference, they do not consider equivariance and focus on historical state encoding.
Thus, they are hard to extend to our task where equivariance and second-order information are vital.

\vspace{-1ex}
\section{Conclusion}
\vspace{-1ex}
In this work, we address the generalization problem of learning N-body systems and introduce SEGNO, which incorporates the equivariant property from GNN backbones and second-order physical inductive bias. 
We theoretically prove the uniqueness and boundedness of the trajectories inferred by SEGNO and empirically demonstrate the potential of SEGNO by applying it to a wide range of physical systems.
Extensive ablation studies have further substantiated the generalization ability of SEGNO.
For future works, we are interested in (1) extending our framework to solve stochastic~\citep{DBLP:conf/nips/SalviLG22} and partial differential equations~\citep{DBLP:journals/corr/abs-2210-00612,DBLP:journals/corr/abs-2307-13592,DBLP:conf/icml/CaoCLJ23,DBLP:conf/icml/XueHTPL23}; 
(2) jointly considering trajectory forecasting and static tasks such as molecular property~\citep{satorras2021n} or force field predictions~\citep{batzner20223};
(3) integrating more sophisticated physical principles through advanced techniques such as pre-training~\citep{rong2020self,cheng2023wiener,10.1609/aaai.v37i7.25978}, prompt tuning~\citep{sun2023all} or the utilization of large language models~\citep{li2023survey}.

\subsubsection*{Acknowledgments}
The research of Li was supported by NSFC Grant No. 62206067, HKUST-HKUST(GZ) 20 for 20 Cross-campus Collaborative Research Scheme C019 and Guangzhou-HKUST(GZ) Joint Funding Scheme 2023A03J0673.
The research of Tsung was supported by NSFC Grant No. 72371271 and Guangzhou Nansha District Key Project 2023ZD003. Particularly, Yu Rong expresses heartfelt gratitude to his wife, Yunman Huang, for the birth of their daughter Xing Rong,  which brought immense joy during the pursuit of this research.

\bibliography{iclr2024_conference}
\bibliographystyle{iclr2024_conference}

\appendix
\newpage

\section{Proofs}
Note that $\pmb{h}$ is identical for both real and learned systems. Unless otherwise specified, we omit $\pmb{h}$ in the following discussions.

\subsection{Proof of Proposition~\ref{pro:equi_group}}\label{appendix:proof:prop3.1}

According to the constraints of the initial conditions, the initial position $\bm{q}_{\theta}^{(t_{0})}$ is $E(3)$-equivariant and the initial velocity $\bm{\dot{q}}_{\theta}^{(t_{0})}$ is $O(3)$-eqivariant and translation-invariant.
Given that backbone GNN $f_{\theta}$ are $O(3)$-equivariant and translation-invariant, and $\mathcal{G}_{1}$ is $O(3)$-equivariant, for any translation vector $\bm{b} \in \mathbb{R}^{3}$, orthogonal matrix $\bm{A} \in \mathbb{R}^{3 \times 3}$ and $t' \in [0, \Delta t]$, we have
\begin{equation}
\begin{aligned}
    \Psi_{t', g_{\theta}, f_{\theta}}(\bm{A} \bm{q}_{\theta}^{(t_{0})} + \bm{b}) & = g_{\theta}(\bm{A} \bm{q}_{\theta}^{(t_{0})} + \bm{b}) + \mathcal{G}_{1} (f_{\theta}(\bm{A} \bm{q}_{\theta}^{(t_{0})} + \bm{b}), t') \\
    & = \bm{A} g_{\theta}(\bm{q}_{\theta}^{(t_{0})}) + \mathcal{G}_{1} (\bm{A} f_{\theta}( \bm{q}_{\theta}^{(t)}), t') \\ 
    & = \bm{A} \bm{\dot{q}}_{\theta}^{(t_{0})} + \bm{A} \mathcal{G}_{1}(f_{\theta}(\bm{q}_{\theta}^{(t_{0})}), t') \\
    & = \bm{A} \bm{\dot{q}}_{\theta}^{(t_{0} + t')} = \bm{A} \Psi_{\Delta t, g_{\theta}, f_{\theta}}( \bm{q}_{\theta}^{(t_{0})}).
\end{aligned}
\end{equation}
Thus, $\Psi_{t', g_{\theta}, f_{\theta}}( \bm{q}_{\theta}^{(t_{0})})$ is proved to be $O(3)$-equivariant and translation-invariant.
Note that $\mathcal{G}_{2}$ is $O(3)$-equivariant, then, for $\bm{q}_{\theta}^{(t_{0} + t')}$, we have
\begin{equation}
\begin{aligned}
    \Phi_{t', g_{\theta}}(\bm{A} \bm{q}_{\theta}^{(t_{0})} + \bm{b}) & = \bm{A} \bm{q}_{\theta}^{(t_{0})} + \bm{b} + \mathcal{G}_{2}(\Psi_{t', g_{\theta}, f_{\theta}}(\bm{A} \bm{q}_{\theta}^{(t_{0})} + \bm{b}), t') \\
    & = \bm{A} \bm{q}_{\theta}^{(t_{0})} + \mathcal{G}_{2}(\bm{A} \Psi_{t', g_{\theta}, f_{\theta}}( \bm{q}_{\theta}^{(t_{0})}), t') + \bm{b} \\
    & = \bm{A} \bm{q}_{\theta}^{(t_{0})} + \bm{A} \mathcal{G}_{2}(\Psi_{t', g_{\theta}, f_{\theta}}( \bm{q}_{\theta}^{(t_{0})}), t') + \bm{b} \\
    & = \bm{A} \bm{q}_{\theta}^{(t_{0} + t')} + \bm{b}.
\end{aligned}
\end{equation}
Therefore, $\bm{q}_{\theta}^{(t_{0} + t')} = \Phi_{t', g_{\theta}}(\bm{q}_{\theta}^{(t_{0})})$ is $E(3)$-equivariant.
Note that the composition of $E(3)$-equivariant functions is still $E(3)$-equivariant. 
For any time $t \in [t_{0}, t_{1}]$, $\bm{q}_{\theta}^{(t)}$, which is generated via iteratively calling integrator $\Phi_{t', g_{\theta}}$ (Eq.~\ref{eq:traj}) with suitable $t' \in [0, \Delta t]$, is $E(3)$-equivariant.
Overall, the approximated trajectory $\bm{q}_{\theta}$ is $E(3)$-equivariant. 

In real-world applications, the assumption of this proposition is generally satisfied.
In terms of the equivariance of backbone GNN $f_{\theta}$, for example, if $f_{\theta}$ is EGNN whose message passing is defined by
\begin{equation}
\begin{gathered}
\bm{m_{ij}}^{(t)} = \phi_e\left(||\bm{q}_{\theta, i}^{(t)} - \bm{q}_{\theta, j}^{(t)}||^2, \bm{h}_i, \bm{h}_j, a_{ij}\right),  \quad
\bm{\ddot{q}}_{\theta, i}^{(t)} = \frac{1}{N - 1}\sum_{j\in\mathcal{N}_i}(\bm{q}_{\theta, i}^{(t)} - \bm{q}_{\theta, j}^{(t)})\phi_q(\bm{m}_{ij}^{(t)}).\\
\bm{h}_i^{(t+\Delta t)}=\bm{h}_i^{(t)} + \phi_h(\bm{h}_{i}^{(t)}, \sum_{j\in \mathcal{N}_i}\bm{m_{ij}}^{(t)}).
\end{gathered}
\label{eq:egnn}
\end{equation}
Here $\phi_{q}$ denotes Multi-Layer Perceptron (MLP) whose output is a scalar and the output of $\phi_{e}, \phi_{h}$ are vectors. 
The non-geometric features are updated via skip connections. 
Analogous to neural ODE methods, the model parameters are shared among all iterations. 
By~\citep{satorras2021n}, it can be easily shown that $f_{\theta}$ is $O(3)$-equivariant and translation-invariant.

In addition, the increment functions $\mathcal{G}_{1}, \mathcal{G}_{2}$ of several widely used numerical integrators for motion dynamics are $O(3)$-equivariant, ensuring that the approximated trajectory is $E(3)$-equivariant. 
For example, a symplectic Euler integrator computes
\begin{equation}\label{eq:euler}
    \bm{q}^{(t + \Delta t)} = \bm{q}^{(t)} + \bm{\dot{q}}^{(t + \Delta t)}\Delta t,\ \bm{\dot{q}}^{(t + \Delta t)} = \bm{\dot{q}}^{(t)} + \bm{\ddot{q}}^{(t)}\Delta t,
\end{equation}
where $\mathcal{G}_{1}(x, y) = \mathcal{G}_{2}(x, y) = x \times y$ is $O(3)$-equivariant. 
It is straightforward to show that 
\begin{equation}
    \bm{A} \bm{\dot{q}}^{(t)} + \bm{A} \bm{\ddot{q}}^{(t + \Delta t)}\Delta t = \bm{A} \bm{\dot{q}}^{(t + \Delta t)}, \ \bm{A} \bm{q}^{(t)} + \bm{b} + \bm{A} \bm{\dot{q}}^{(t + \Delta t)}\Delta t = \bm{A} \bm{q}^{(t + \Delta t)} + \bm{b}.\ 
\end{equation}
This property also holds for Velocity Verlet
\begin{equation}
    \bm{q}^{(t + \Delta t)} = \bm{q}^{(t)} + \bm{\dot{q}}^{(t)}\Delta t + \frac{1}{2}\bm{\ddot{q}}^{(t)}\Delta t^2,\ \bm{\dot{q}}^{(t + \Delta t)} = \bm{\dot{q}}^{(t)} + \frac{1}{2}(\bm{\ddot{q}}^{(t)} + \bm{\ddot{q}}^{(t + \Delta t)})\Delta t,
\end{equation}
and Leapfrog
\begin{equation}
    \bm{q}^{(t + \Delta t)} = \bm{q}^{(t)} + \bm{\dot{q}}^{(t)}\Delta t + \frac{1}{2}\bm{\ddot{q}}^{(t)}\Delta t^2,\ \bm{\dot{q}}^{(t + \Delta t)} = \bm{\dot{q}}^{(t)} + \bm{\ddot{q}}^{(t + \Delta t)}\Delta t.
\end{equation}
The proof is the same as symplectic Euler.

\subsection{Proof of Lemma~\ref{thm:unique}}
\label{appendix:proof:thm4.1}

We first prove the uniqueness of the solution in our dynamic system.
The key to this proof is to convert the high-order non-linear ODE to a first-order non-linear ODE.
Let's define a new variable $\bm{Q}^{(t)} = (\bm{\dot{q}}^{(t)}, \bm{q}^{(t)})$ and $\bm{\dot{Q}}^{(t)}$ as its first-order derivative with respect $t$.
Then in terms of this new variable, the second-order non-linear ODE becomes
\begin{equation}
\begin{aligned}
    \bm{\dot{Q}}^{(t)} = (F(\bm{Q}^{(t)}), G(\bm{Q}^{(t)})),
\end{aligned}
\label{eq:nonlinear_ode}
\end{equation}
where $F(\bm{Q}^{(t)})) = f(\bm{q}^{(t)})$ and $G(\bm{Q}^{(t)}) = \bm{\dot{q}}^{(t)}$.
Furthermore, let's define $H(\bm{Q}^{(t)}) = (F(\bm{Q}^{(t)}), G(\bm{Q}^{(t)}))$, and Eq.~\ref{eq:nonlinear_ode} is reformatted as
\begin{equation}
\begin{aligned}
    \bm{\dot{Q}}^{(t)} = H(\bm{Q}^{(t)}), \quad \bm{Q}^{(t_{0})} = (\bm{\dot{q}}^{(t_{0})}, \bm{q}^{(t_{0})}),
\end{aligned}
\end{equation}
which is exactly a first-order non-linear ODE with a known initial condition.
Note that $G$ and $f$ are both Lipschitz continuous, thus $H$ is Lipschitz continuous.
Then, based on Picard's existence theorem~\citep{coddington1956theory}, the aforementioned non-linear ODE has a unique solution $\bm{Q}^{(t)}$, $\forall t \in [t_{0}, t_{1}]$.
Subsequently, our system has a unique solution $\bm{q}^{(t)}$, $\forall t \in [t_{0}, t_{1}]$.

\subsection{Proof of Proposition~\ref{pro:unique}}\label{appendix:proof:prop4.2}

Due to the uniqueness of the solution, if the realistic measurement $\bm{q}^{(t_{1})}$ is given, then the trajectory $\bm{q}^{(t)}$ is fully determined over the time interval $[t_{0}, t_{1}]$.
Under SEGNO framework, we define the discrepancy between $\bm{q}_{\theta}^{(t_{1})}$ and $\bm{q}^{(t_{1})}$ as
\begin{equation}\label{eq:position_loss}
\begin{aligned}
    d(\bm{q}_{\theta}^{(t_{1})}, \bm{q}^{(t_{1})}) = || \bm{q}_{\theta}^{(t_{1})}- \bm{q}^{(t_{1})} ||_{p} = || (\Phi_{\Delta t, g_{\theta}})^{\tau}(\bm{q}^{(t_{0})}) - \phi_{T, g}(\bm{q}^{(t_{0})}) ||_{p}, 
\end{aligned}
\end{equation}
where $||\cdot||_{p}$ represents the $p$-norm.
According to Eq.~\ref{eq:solver}, $\Phi_{\Delta t, g_{\theta}}$ is fully determined by $f_{\theta}$.
Without loss of generality, we take SEGNO with Euler integrators as an example. 
Given the known initial position and velocity, along with neural ODE update scheme in SEGNO, we can convert Eq.~\ref{eq:position_loss} to the following form
\begin{equation}
\begin{aligned}
    & d(\mathbf{q}_{\theta}^{(t_{1})}, \mathbf{q}^{(t_{1})}) = || \mathbf{q}_{\theta}^{(t_{1})} - \mathbf{q}^{(t_{1})} ||_{p} \\ 
    & = ||\sum_{k=0}^{\tau - 1}(\int_{t_{0} + k \Delta t}^{t_{0} + (k+1)\Delta t} (\mathbf{\dot{q}}^{(t_{0} + k \Delta t)} + \int_{t_{0} + k \Delta t}^{t} f(\mathbf{q}^{(m)}) dm )dt - (\mathbf{\dot{q}}_{\theta}^{(t_{0} + k \Delta t)} \Delta t + f_{\theta}(\mathbf{q}_{\theta}^{(t_{0} + k \Delta t)}) \Delta t^2)||_{p} \\
    & = || \sum_{k=0}^{\tau - 1} \mathcal{D}(\mathbf{\dot{q}}^{(t_{0} + k \Delta t)}, f(\mathbf{q}^{(t_{0} + k \Delta t)}), \mathbf{\dot{q}}_{\theta}^{(t_{0} + k \Delta t)}, f_{\theta}(\mathbf{q}_{\theta}^{(t_{0} + k \Delta t)}))||_{p},
\end{aligned}
\end{equation}
where $\mathcal{D}(\cdot)$ denotes the discrepancy function.
Considering that there are no sudden changes in velocity or acceleration for a smooth trajectory, with sufficiently small $\Delta t$, we can approximate $\mathcal{D}(\cdot)$ as follows
\begin{equation}
\begin{aligned}
    & \mathcal{D}(\mathbf{\dot{q}}^{(t_{0} + k \Delta t)}, f(\mathbf{q}^{(t_{0} + k \Delta t)}), \mathbf{\dot{q}}_{\theta}^{(t_{0} + k \Delta t)}, f_{\theta}(\mathbf{q}_{\theta}^{(t_{0} + k \Delta t)})) \\ 
    & \approx (\mathbf{\dot{q}}^{(t_{0} + k \Delta t)} - \mathbf{\dot{q}}_{\theta}^{(t_{0} + k \Delta t)}) \Delta t + (f(\mathbf{q}^{(t_{0} + k \Delta t)}) - f_{\theta}(\mathbf{q}_{\theta}^{(t_{0} + k \Delta t)}) \Delta t^2 \\
    & \approx \sum_{i=0}^{k} (f(\mathbf{q}^{(t_{0} + i \Delta t)}) - f_{\theta}(\mathbf{q}_{\theta}^{(t_{0} + i \Delta t)}) \Delta t^2.
\end{aligned}
\end{equation}
Note that Lemma~\ref{thm:unique} guarantees the uniqueness of target trajectory as well as its acceleration. 
Therefore, according to the aforementioned analysis, SEGNO tends to let $f_{\theta}(\mathbf{q}^{(t_{0} + k \Delta t)}_{\theta})$ approximate the unique accelaration $f(\mathbf{q}^{(t_{0} + k \Delta t)})$ via minimizing position loss. 
Then, under the assumption of widely used universal approximation theorem~\citep{hornik1989multilayer}, there exists a $f_{{\theta}^{\ast}}$ such that $f_{{\theta}^{\ast}}(\bm{q}_{{\theta}^{\ast}}^{(t)}, \bm{h}) = f(\bm{q}^{(t)}, \bm{h})$, $\forall t \in [t_{0}, t_{1}]$.

\subsection{Proof of Theorem~\ref{thm:local}}\label{appendix:proof:thm4.3}

The proof sketch is as follows: We first revisit the core definitions pertaining to neural ODE in SEGNO and introduce its variant with Euler integrator, then derive the bound for first-order approximation error $||g_{\theta} - g||_{\infty}$, and finally extend the results to the second-order case $||f_{\theta} - f||_{\infty}$ to finish the proof. 

\subsubsection{Euler Integrator in SEGNO}

As mentioned in Eq.~\ref{eq:euler_solver}, the Euler integrator $\Phi_{\Delta t, g_{\theta}}$, as mentioned in Eq.~\ref{eq:euler}, approaches $\phi_{\Delta t, g_{\theta}}$ via
\begin{equation}\label{eq:euler_solver_appendix}
\begin{gathered}
    g_{\theta}(\bm{q}_{\theta}^{(t + \Delta t)}) = g_{\theta}(\bm{q}_{\theta}^{(t)}) + f_{\theta}(\bm{q}_{\theta}^{(t)}) \Delta t, \\
    \Phi_{\Delta t, g_{\theta}}(\bm{q}_{\theta}^{(t)}) = \bm{q}_{\theta}^{(t)} + g_{\theta}(\bm{q}_{\theta}^{(t + \Delta t)}) \Delta t.
\end{gathered}
\end{equation}
Considering Eq.~\ref{eq:traj} and \ref{eq:loss}, it is straightforward to reframe our learning objective in the context of a neural ODE:
\begin{equation}\label{eq:obj_ode}
    \mathcal{L}_{\text{train}} =  \sum_{s\in\mathcal{D}_{\text{train}}} || \bm{q}_{\theta, s}^{(t_1)}-\bm{q}_{s}^{(t_1)} ||^{2} = \sum_{s\in\mathcal{D}_{\text{train}}} || (\Phi_{\Delta t, g_{\theta}})^{\tau}(\bm{q}^{(t_{0})}_{s}) - \phi_{T, g}(\bm{q}^{(t_{0})}_{s}) ||^{2}.
\end{equation}

\subsubsection{Approximation Error of $f_{\theta}$}

In our dynamical system, $g_{\theta}$ and $g$ are entirely determined by $f_{\theta}$ and $f$ respectively. 
Thus, we first establish the boundedness of $||g_{\theta} - g||_{\infty}$, then demonstrate the approximation error of $||f_{\theta} - f||_{\infty}$.

\begin{lemma}\label{lemma:error}
    For $\bm{q}^{(t_{0})} \in \mathbb{R}^{3}$ as the initial position of a trajectory, $r_{a}, r_{b}, T, \tau > 0$ and $k \in \mathbb{Z}^{+}$, a given ODE solver that is $\tau$ compositions of an Euler Integrator $\Phi_{\Delta t, g_{\theta}}$ with $\Delta t = T/\tau$, a set of points on exact trajectory associated with time increment interval $[k \Delta t, T + k \Delta t]$ as 
    \begin{equation}
        V_{k \Delta t}^{T + k \Delta t} = \{ \phi_{t', g}(\bm{q}^{(t_{0})}) | k \Delta t \leq t' \leq T + k \Delta t \},
    \end{equation}
    , and denote
    \begin{equation}
        \mathcal{L}_{k \Delta t}^{T + k \Delta t} = \frac{1}{T} || (\Phi_{\Delta t, g_{\theta}})^{\tau} - \phi_{T, g} ||_{\mathcal{B}(V_{k \Delta t}^{T + k \Delta t}, r_{a})},
    \end{equation}
    and suppose that $g$ and $g_{\theta}$ are analytic and bounded by $m$ within $\mathcal{B}(V_{k \Delta t}^{T + k \Delta t}, r_{a} + r_{b})$, the union of complex balls centered at $\bm{q} \in V_{k \Delta t}^{T + k \Delta t}$ with radius $r_{a} + r_{b}$. Then, there exist constants $T_{0}$ and $C$ that depends on $r_{a}/m$, $r_{b}/m$, $\tau$, and $\Phi_{\Delta t, g_{\theta}}$, such that, if $0 < T < T_{0}$, $\forall t \in [t_{0} + k \Delta t, t_{0} + T + k \Delta t]$,
    \begin{equation}\label{eq:bound}
        || g_{\theta}(\bm{q}^{(t)}) - g(\bm{q}^{(t)}) ||_{\infty} \leq Cm\Delta t + \frac{e}{e - 1} \mathcal{L}_{k \Delta t}^{T + k \Delta t},
    \end{equation}
    where $e$ is the base of the natural logarithm.
\end{lemma}

\begin{proof}
    This result can be directly derived from Theorem 3.1, 3.2 and Corollary 3.3 in~\citep{zhu2022numerical} via replacing the integrator from Runge-Kutta integrator with Euler integrator since Euler integrator has been proven to satisfy the prerequisites of theorems in Appendix B.2 in~\citep{zhu2022numerical}.
\end{proof}

Since $\mathcal{B}(V_{0}^{T}, r_{1}) \subset \mathcal{B}(V_{0}^{2T}, r_{1})$, per our assumption, then $g$ and $g_{\theta}$ are both analytic and bounded by $m_{1}$ in $\mathcal{B}(V_{0}^{T}, r_{1})$.
To utlize Lemma~\ref{lemma:error}, we set $k = 0$ and $T = t_{1} - t_{0}$.
With suitable $r_{1}$ and $m_{1}$, we have $T < T_{0}$ and, $\forall t \in [t_{0}, t_{1}]$, 
\begin{equation}\label{eq:1st_bound}
    || g_{\theta}(\bm{q}^{(t)}) - g(\bm{q}^{(t)}) ||_{\infty} \leq C_{1} m_{1} \Delta t + \frac{e}{e - 1} \mathcal{L}_{0}^{T},
\end{equation}
where $\mathcal{L}_{0}^{T} = \frac{1}{T} || (\Phi_{\Delta t, g_{\theta}})^{\tau} - \phi_{T, g} ||_{\mathcal{B}(V_{0}^{T}, r_{1})}$ and $C_{1}$ is a control constant. 

To establish the connection between $g$ and $f$, we only focus on the first time step $\Delta t$ instead of the entire time interval $T$.
Recall that $g$ is obtained by integrating $f$, we have
\begin{equation}
    \psi_{\Delta t, g, f}(\bm{q}^{(t_{0})}) = g(\bm{q}^{(t_{0} + \Delta t)}) = g(\bm{q}^{(t_{0})}) + \int_{t_{0}}^{t_{0} + \Delta t} f(\bm{q}^{(t)}) \, dt,
\end{equation}
as the flow map for the first-order derivative of the exact trajectory in a single step. 
As defined in Eq.~\ref{eq:euler_solver_appendix}, its corresponding Euler integrator has the form
\begin{equation}
    g_{\theta}(\bm{q}^{(t_{0} + \Delta t)}_{\theta}) = \Psi_{\Delta t, g_{\theta}, f_{\theta}}(\bm{q}^{(t_{0})}) = g_{\theta}(\bm{q}_{\theta}^{(t_{0})}) + f_{\theta}(\bm{q}_{\theta}^{(t_{0})}) \Delta t.
\end{equation}

Note that $\mathcal{B}(V_{0}^{T}, r_{2}) \subset \mathcal{B}(V_{0}^{2T}, r_{2})$, $f$ and $f_{\theta}$ are both analytic and bounded by $m_{2}$ in $\mathcal{B}(V_{0}^{T}, r_{2})$.
In lemma~\ref{lemma:error}, we substitute $\phi_{\Delta t, g_{\theta}}$ by $\Psi_{\Delta t, g_{\theta}, f_{\theta}}$ and set $T = \Delta t$ with $\tau = 1$ and $k = 0$, then the corresponding loss becomes
\begin{equation}
\begin{aligned}
    \mathcal{L}_{0, 1}^{T} & = \frac{1}{\Delta t} || \Psi_{\Delta t, g_{\theta}, f_{\theta}} - \psi_{\Delta t, g, f} ||_{\mathcal{B}(V_{0}^{\Delta t}, r_{2})} \\
    & = \sup_{\bm{q} \in \mathcal{B}(V_{0}^{\Delta t}, r_{2})} \frac{1}{\Delta t} || \Psi_{\Delta t, g_{\theta}, f_{\theta}}(\bm{q}) - \psi_{\Delta t, g, f}(\bm{q}) ||_{\infty} \\
    & = \frac{1}{\Delta t} || \Psi_{\Delta t, g_{\theta}, f_{\theta}}(\bm{q}^{\ast}) - \psi_{\Delta t, g, f}(\bm{q}^{\ast}) ||_{\infty} \\
    & = \frac{1}{\Delta t} || g_{\theta} (\tilde{\bm{q}}_{\theta}^{(t_{0} + \Delta t)}) - g(\tilde{\bm{q}}^{(t_{0} + \Delta t)}) ||_{\infty},
\end{aligned}
\label{eq:second_order_loss}
\end{equation}
where $\bm{q}^{\ast}$ denote the point that maximizes the Eq.~\ref{eq:second_order_loss} and $\tilde{\bm{q}}^{(t)}$ is another trajectory with $\tilde{\bm{q}}^{(t_{0})} = \bm{q}^{\ast}$ because $\bm{q}^{\ast} \in V_{0}^{\Delta t}$ may not holds.
For both $\tilde{\bm{q}}^{(t)}$ and $\bm{q}^{t}$, the actual position and estimation from SEGNO at $t_{0} + \Delta t$ have the form 
\begin{equation}
\begin{aligned}
    \tilde{\bm{q}}^{(t_{0} + \Delta t)} & = \tilde{\bm{q}}^{(t_{0})} + \int_{t_{0}}^{t_{0} + \Delta t} g(\tilde{\bm{q}}^{(t)}) \, dt, \\
    \tilde{\bm{q}}_{\theta}^{(t_{0} + \Delta t)} & = \tilde{\bm{q}}^{(t_{0})} + g_{\theta}(\tilde{\bm{q}}^{(t_{0})}) \Delta t + f_{\theta}(\tilde{\bm{q}}^{(t_{0})}) \Delta t^{2}, \\
    \bm{q}_{\theta}^{(t_{0} + \Delta t)} & = \bm{q}^{(t_{0})} + g_{\theta}(\bm{q}^{(t_{0})}) \Delta t + f_{\theta}(\bm{q}^{(t_{0})}) \Delta t^{2}.
\end{aligned}
\end{equation}
Then, with suitable $\Delta t$ and $r_{1}$, $g(\tilde{\bm{q}}^{(t_{0} + \Delta t)})$, $g_{\theta}(\tilde{\bm{q}}_{\theta}^{(t_{0} + \Delta t)})$ and $g_{\theta}(\bm{q}_{\theta}^{(t_{0} + \Delta t)})$ are bounded by $m_{1}$ since $\tilde{\bm{q}}^{(t_{0} + \Delta t)}, \tilde{\bm{q}}_{\theta}^{(t_{0} + \Delta t)}, \bm{q}_{\theta}^{(t_{0} + \Delta t)} \in \mathcal{B}(V_{T}, r_{1})$.
Given such, the aforementioned loss is transformed into
\begin{equation}
\begin{aligned}
    \mathcal{L}_{0, 1}^{T} & \leq \frac{1}{\Delta t} \bigl[ || g_{\theta} (\tilde{\bm{q}}_{\theta}^{(t_{0} + \Delta t)}) - g_{\theta}(\tilde{\bm{q}}^{(t_{0} + \Delta t)}) ||_{\infty} + || g_{\theta}(\tilde{\bm{q}}^{(t_{0} + \Delta t)}) - g_{\theta}(\bm{q}^{(t_{0} + \Delta t)}) ||_{\infty} \\
    & \quad + || g_{\theta}(\bm{q}^{(t_{0} + \Delta t)}) - g(\bm{q}^{(t_{0} + \Delta t)}) ||_{\infty} + || g(\bm{q}^{(t_{0} + \Delta t)}) -  g(\tilde{\bm{q}}^{(t_{0} + \Delta t)}) ||_{\infty} \bigl] \\
    & \leq \frac{1}{\Delta t} (6 m_{1} + C_{1} m_{1} \Delta t + \frac{e}{e - 1} \mathcal{L}_{0}^{T} ).
\end{aligned}
\end{equation}
Subsequently, given that $\Delta t < T < T_{0}$, we have, $\forall t \in [t_{0}, t_{0} + \Delta t]$,
\begin{equation}
\begin{aligned}
    || f_{\theta}(\bm{q}^{(t)}) - f(\bm{q}^{(t)}) ||_{\infty} & \leq C_{2} m_{2} \Delta t + \frac{e}{e - 1} \mathcal{L}_{0, 1}^{T}, \\
    & \leq C_{2} m_{2} \Delta t + \frac{e}{e - 1} \big[ \frac{1}{\Delta t} (6 m_{1} + C_{1} m_{1} \Delta t + \frac{e}{e - 1} \mathcal{L}_{0}^{T}) \big], \\ 
    & \leq C_{2} m_{2} \Delta t + (\frac{e}{e - 1})^{2} \frac{\mathcal{L}_{0}^{T}}{\Delta t} + \frac{e}{e - 1} (\frac{6 m_{1}}{\Delta t} + C_{1} m_{1}), \\
    & \leq O(\Delta t + \frac{\mathcal{L}_{0}^{T}}{\Delta t}).
\end{aligned}
\end{equation}

To extend the boundness up to $t_{1}$, we can easily utilize Lemma~\ref{lemma:error} with different $k = 1, \cdots, \tau - 1$ (Eq.~\ref{eq:1st_bound}) to repeat the above derivation, and obtain, $\forall t \in [t_{0} + k \Delta t, t_{0} + (k + 1) \Delta t]$,
\begin{equation}
\begin{aligned}
    || f_{\theta}(\bm{q}^{(t)}) - f(\bm{q}^{(t)}) ||_{\infty} \leq O(\Delta t + \frac{\mathcal{L}_{k \Delta t}^{T + k \Delta t}}{\Delta t}),
\end{aligned}
\end{equation}
where $\mathcal{L}_{k \Delta t}^{T + k \Delta t} = \frac{1}{T} || (\Phi_{\Delta t, g_{\theta}})^{\tau} - \phi_{T, g} ||_{\mathcal{B}(V_{k \Delta t}^{T + k \Delta t}, r_{1})}$. 
Therefore, $\forall t \in [t_{0}, t_{1}]$,
\begin{equation}
\begin{aligned}
    || f_{\theta}(\bm{q}^{(t)}) - f(\bm{q}^{(t)}) ||_{\infty} & \leq \sup _{k = 0, \cdots, \tau - 1} O(\Delta t + \frac{\mathcal{L}_{k \Delta t}^{T + k \Delta t}}{\Delta t}) \\
    & \leq O(\Delta t + \frac{\mathcal{L}_{0}^{2T}}{\Delta t}),
\end{aligned}
\end{equation}
where $\mathcal{L}_{0}^{2T} = \frac{1}{T} || (\Phi_{\Delta t, g_{\theta}})^{\tau} - \phi_{T, g} ||_{\mathcal{B}(V_{0}^{2T}, r_{1})}$ and it concludes the proof.

\subsection{Proof of Corollary~\ref{thm:error}}\label{appendix:proof:coro4.4}

\subsubsection{Local truncation error}

We use Taylor series expansion to approximate the trajectory at time $t + \Delta t$ as 
\begin{equation}
    \bm{q}^{(t + \Delta t)} = \bm{q}^{(t)} + g(\bm{q}^{(t)})\Delta t+ \frac{1}{2}f(\bm{q}^{(t)})\Delta t^2 + O(\Delta t^3).
\end{equation}
Then the local truncation error $\epsilon_{t+\Delta t}$, $\forall t \in [t_{0}, t_{1}]$, equals to
\begin{equation}
\begin{aligned}
    \epsilon_{t+\Delta t} &= ||\bm{q}^{(t+\Delta t)} - \bm{q}^{(t)} - g(\bm{q}^{(t)})\Delta t - f_{\theta}(\bm{q}^{(t)})\Delta t^2||_2 \\
    & = ||\bm{q}^{(t)} + g(\bm{q}^{(t)})\Delta t+ \frac{1}{2}f(\bm{q}^{(t)})\Delta t^2 + O(\Delta t^3) \\
    & \quad - \bm{q}^{(t)} - g(\bm{q}^{(t)})\Delta t  - f_{\theta}(\bm{q}^{(t)})\Delta t^2||_2 \\
    & = || \frac{1}{2}f(\bm{q}^{(t)})\Delta t^2 - f_{\theta}(\bm{q}^{(t)})\Delta t^2 + O(\Delta t^3)||_2 \\
    & = || \frac{1}{2}(f(\bm{q}^{(t)}) - f_{\theta}(\bm{q}^{(t)}))\Delta t^2 - \frac{1}{2}f_{\theta}(\bm{q}^{(t)})\Delta t^2 + O(\Delta t^3)||_2 \\
    & \leq || \frac{1}{2}(f(\bm{q}^{(t)}) - f_{\theta}(\bm{q}^{(t)}))\Delta t^2||_2 + ||\frac{1}{2}f_{\theta}(\bm{q}^{(t)})\Delta t^2||_2 + O(\Delta t^3) \\
    & \leq \frac{\sqrt{3}\Delta t^2}{2}||f(\bm{q}^{(t)}) - f_{\theta}(\bm{q}^{(t)}))||_{\infty} + ||\frac{1}{2}m_2\Delta t^2||_2+ O(\Delta t^3) \\
    & \leq \frac{\sqrt{3}\Delta t^2}{2}O(\Delta t + \frac{2\mathcal{L}_{0}^{2T}}{\Delta t}) + O(\Delta t^2).
\end{aligned}
\label{eq:local}
\end{equation}
Here the last inequality is due to Theorem~\ref{thm:local}. 
If the learned model achieves a near-perfect approximation of the true trajectory, resulting in a significantly diminished loss $\mathcal{L}_{0}^{2T}$. 
Then, $\forall t \in [t_{0}, t_{1}]$,
\begin{equation}
\begin{aligned}
    \epsilon_{t+\Delta t} \leq O(\Delta t^2).
\end{aligned}
\end{equation}

\subsubsection{Global truncation error}

Considering $t \in [t_{0}, t_{1}]$ and $k = 1, \cdots, \tau - 1$, the boundness of the global truncation error can be derived in a recursive way via
\begin{equation}
\begin{aligned}
    \mathcal{E}_{t + (k+1)\Delta t} & = || \bm{q}^{(t + (k+1)\Delta t)} - \bm{q}^{(t + (k+1)\Delta t)}_{\theta} ||_{2} \\
    & = || \bm{q}^{(t + (k+1)\Delta t)} - \bm{q}^{(t + k\Delta t)}_{\theta} - g_\theta(\bm{q}_\theta^{(t + k \Delta t)})\Delta t - f_{\theta}(\bm{q}_{\theta}^{(t + k \Delta t)}) \Delta t^{2} ||_{2} \\
    & = || \bm{q}^{(t + k\Delta t)} - \bm{q}^{(t + k\Delta t)}_{\theta} + \bm{q}^{(t + (k+1)\Delta t)} - \bm{q}^{(t + k\Delta t)}  - g(\bm{q}^{(t + k\Delta t)})\Delta t \\
    & \quad - f_{\theta}(\bm{q}^{(t + k \Delta t)}) \Delta t^{2}+ f_{\theta}(\bm{q}^{(t + k \Delta t)}) \Delta t^{2} - f_{\theta}(\bm{q}_{\theta}^{(t + k \Delta t)}) \Delta t^{2} \\ 
    & \quad + g(\bm{q}^{(t + k\Delta t)})\Delta t - g_{\theta}(\bm{q}^{(t + k \Delta t)}) \Delta t \\
    & \quad + g_{\theta}(\bm{q}^{(t + k \Delta t)}) \Delta t - g_\theta(\bm{q}_\theta^{(t + k\Delta t)})\Delta t ||_{2} \\
    & \leq \mathcal{E}_{t + k\Delta t} + \epsilon_{t+(k+1)\Delta t} + || f_{\theta}(\bm{q}^{(t + k \Delta t)}) - f_{\theta}(\bm{q}_{\theta}^{(t + k \Delta t)}) ||_{2} \Delta t^{2} \\ 
    & \quad + \bigl[ || g_{\theta}(\bm{q}^{(t + k \Delta t)}) - g_\theta(\bm{q}_\theta^{(t + k\Delta t)}) ||_{2} + || g(\bm{q}^{(t + k\Delta t)}) - g_{\theta}(\bm{q}^{(t + k \Delta t)}) ||_{2} \bigr] \Delta t.
\end{aligned}
\label{eq:global_error}
\end{equation}
Note that $g_{\theta}$ and $f_{\theta}$ satisfy the Lipschitz continuity, we have
\begin{equation}
\begin{aligned}
    ||g_\theta(\bm{q}^{(t + k\Delta t)}) - g_\theta(\bm{q}_\theta^{(t + k\Delta t)})||_2 \leq L_{g} || \bm{q}^{(t + k\Delta t)} - \bm{q}_\theta^{(t + k\Delta t)} ||_2 = L_{g} || \mathcal{E}_{t + k\Delta t} ||_2, \\
    || f_{\theta}(\bm{q}^{(t + k \Delta t)}) - f_{\theta}(\bm{q}_{\theta}^{(t + k \Delta t)}) ||_{2} \leq L_{f} || \bm{q}^{(t + k\Delta t)} - \bm{q}_\theta^{(t + k\Delta t)} ||_{2} = L_{f} || \mathcal{E}_{t + k\Delta t} ||_2,
\end{aligned}
\end{equation}
where $L_{g}$ and $L_{f}$ denote Lipschitz constant for $g_\theta$ and $f_{\theta}$ respectively. 
Given that the learned model achieves a near-perfect approximation of the true trajectory, by Lemma~\ref{lemma:error} and Eq.~\ref{eq:1st_bound}, we obtain
\begin{equation}
\begin{aligned}
    || g(\bm{q}^{(t + k\Delta t)}) - g_{\theta}(\bm{q}^{(t + k \Delta t)}) ||_{2} \leq \sqrt{3} \bigl[ C_{1} m_{1} \Delta t + \frac{e}{e - 1} \mathcal{L}_{0}^{2T} \bigr] \leq O(\Delta t).
\end{aligned}
\end{equation}
Thus, the global truncation error in Eq.~\ref{eq:global_error} is transformed into
\begin{equation}
\begin{aligned}
    \mathcal{E}_{t + (k+1)\Delta t}
    & \leq (1 + L_{g} \Delta t  + L_{f} \Delta t^{2}) \mathcal{E}_{t + k\Delta t} + O(\Delta t^2).
\end{aligned}
\end{equation}
Note that the global truncation error for the first step $\mathcal{E}_{t + \Delta t}$, where $k = 1$, is exactly the local truncation error $\epsilon_{t+\Delta t}$ mentioned in Eq.~\ref{eq:local}. 
Given that $k \leq \tau$, we can derive, $\forall t \in [t_{0}, t_{1}]$ and $k = 1, \cdots, \tau$,
\begin{equation}
\begin{aligned}
    \mathcal{E}_{t + k\Delta t} &\leq (1 + L_{g} \Delta t  + L_{f} \Delta t^{2})^{k-1}O(\Delta t^2) + \cdots + (1 + L_{g} \Delta t  + L_{f} \Delta t^{2})O(\Delta t^2) \\
    & \leq O(\tau \Delta t^2) = O(\frac{T}{\Delta t} \Delta t^2) = O(\Delta t).
\end{aligned}
\end{equation}

\begin{figure}[t]
    \centering
    \includegraphics[width=0.9\columnwidth]{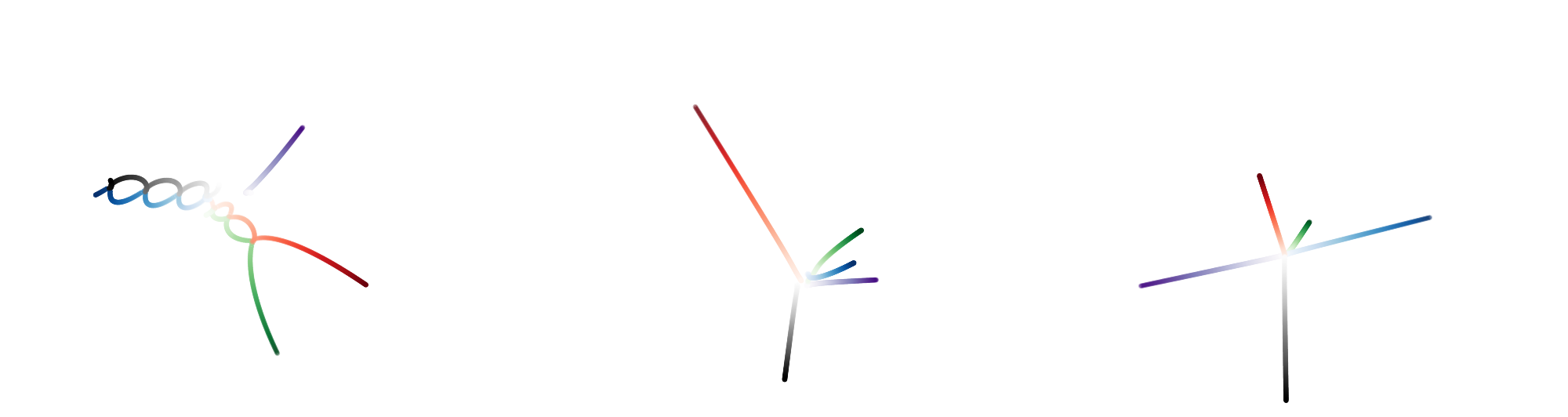}
    \caption{
    Example trajectories of 5-body charged system. From left to right, the number of positive and negative charges are 1, 3, 0.
    }
    \label{fig:vis_charged}
\end{figure}

\begin{figure}[t]
    \centering
    \includegraphics[width=0.7\columnwidth]{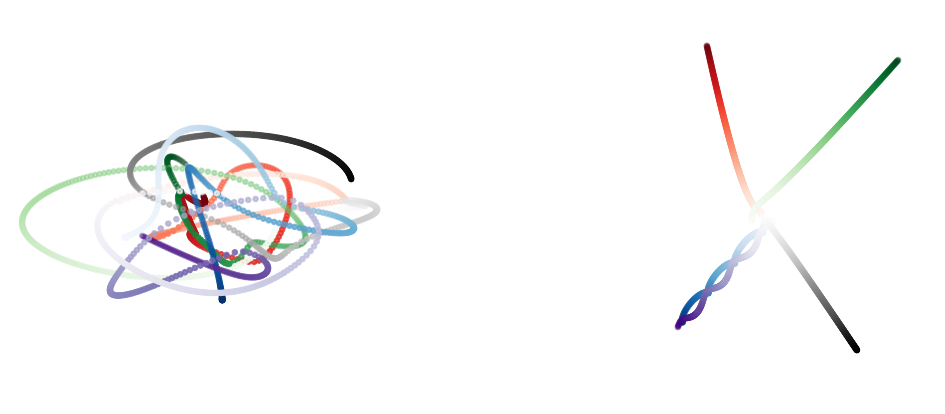}
    \caption{
    Example trajectories of 5-body gravity system. The system formed by gravitational force can either aggregate (i.e., left) or disperse (i.e., right).}
    \label{fig:vis_gravity}
\end{figure}

\section{Implementation details}\label{appendix:exp:impldetails}
\subsection{More details on simulated N-body systems} 
\paragraph{N-body charged system}
We use the same N-body charged system code\footnote{https://github.com/vgsatorras/egnn} with previous work~\citep{satorras2021n,brandstetter2021geometric}.
They inherit the 2D implementation of~\citep{kipf2018neural} and extend it to 3 dimensions.
System trajectories are generated in 0.001 timestep and unbounded with virtual boxes.
The initial location is sampled from a Gaussian distribution (mean $\mu=0$, standard deviation $\sigma=0.5$), and the initial velocity is a random vector of norm $0.5$. 
According to the charge types, three types of systems exist where the difference between the number of positive and negative charges are 1, 3, and 0.
Example trajectories of these three types of systems are provided in Figure~\ref{fig:vis_charged}.

\paragraph{N-body gravity system}
The code\footnote{https://github.com/RobDHess/Steerable-E3-GNN} of gravitational N-body systems is provided by~\citep{brandstetter2021geometric}.
They implement it under the same framework as the above charged N-body systems.
System trajectories are generated in 0.001 timestep, using gravitational interaction and no boundary conditions.
Particle positions are initialized from a unit Gaussian, particle velocities are initialized with a norm equal to one, random direction, and particle mass is set to one.
The system formed by gravitational force can either aggregate or disperse. 
Example trajectories of these two types of systems are provided in Figure~\ref{fig:vis_gravity}.

\paragraph{Hyperparameter}
We empirically find that the following hyperparameters generally work well, and use them across most experimental evaluations: Adam optimizer with learning rate 0.001, the number of epochs 500, hidden dim 64, weight decay $1\times 10^{-12}$, and layer number 4. 
We set the iteration time of SEGNO to 8.
The representation degrees of SE(3)-transformer and TFN are set to 3 and 2.
The number of training, validation, and testing sets are 3000, 2000, and 2000, respectively.

\begin{figure}[t]
    \centering
    \includegraphics[width=0.8\textwidth]{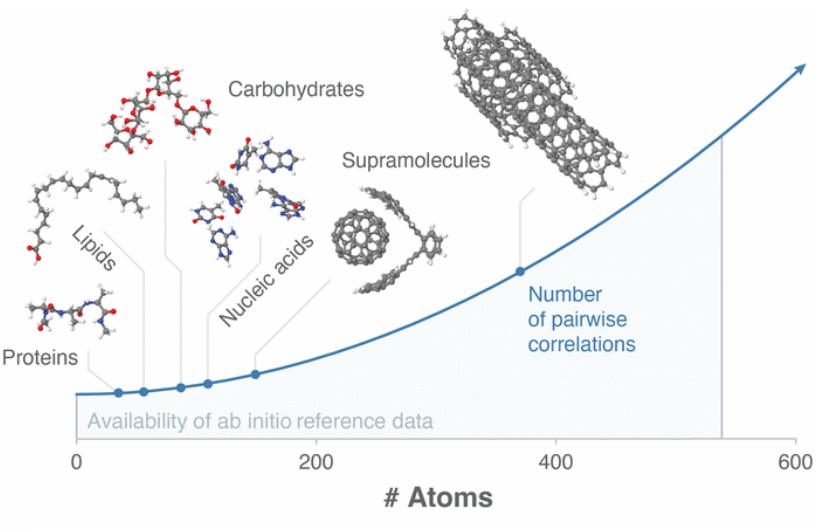}
    \caption{Molecular structures of MD22 dataset, which is borrowed from Fig.1 of original paper~\citep{chmiela2023accurate}.}
    \label{fig:structure}
\end{figure}

\subsection{More details on MD22}
\paragraph{Molecules and Hyperparameter} 
The molecular structures of MD22 are displayed in Figure~\ref{fig:structure}, which is borrowed from their paper~\citep{chmiela2023accurate}.
We use the following hyperparameters across all experimental evaluations: Adam optimizer with learning rate 0.001, the number of epochs 5000 with early stopping 100, hidden dim 64, weight decay $1\times 10^{-12}$, and layer number 4.
The iteration time of SEGNO is searched from 4 to 6.
The representation degrees of SE(3)-transformer and TFN are set to 2.
Due to the limited memory, the batch size of TFN is set to 10.
The number of training, validation, and testing sets are 500, 2000, and 2000, respectively. The threshold for graph construction is set to 2.5 for all molecules.

\subsection{More details on Motion Capture}
\paragraph{Hyperparameter} 
We use the following hyperparameters across all experimental evaluations: Adam optimizer with learning rate 0.001, the number of epochs 3000, hidden dim 64, weight decay $1\times 10^{-12}$, and layer number 4.
The iteration time of SEGNO is set to 4.
We adopt a random split strategy introduced by~\cite{huang2022equivariant} where train/validation/test data contains 200/600/600 frame pairs.

\section{Additional Experiments}

\subsection{Accuracy of learned latent trajectory} \label{appendix:exp:acc}
It is interesting to see how models learn the latent trajectory between the input and output states.
Accordingly, we train models on 1000ts on two datasets and make the test on shorter time steps by performing SEGNO on the smaller $\tau$ step with the same ratio.
For the baselines, we treat the forward timestep of each hidden layer as the same and extract their object position information as the prediction.
Table~\ref{table:long2short} reports the mean and standard deviation of each setting. From Table~\ref{table:long2short} we can observe that:
\begin{itemize}[leftmargin=*]
    \item Clearly, SEGNO outperforms all other baselines across all settings by a large margin. Notably, when there is a lack of supervised signals at 250/500/750ts, the performance of all other baselines decreases significantly.
    By contrast, SEGNO achieves similar results as in 1000ts, demonstrating its robust generalization to short-term prediction.
    \item Another interesting point is that SEGNO's error exhibits a distinct trend compared to other baselines.
    While the errors of other baselines significantly increase with decreasing time steps, SEGNO achieves even smaller errors with shorter time steps.
    This observation justifies our theoretical results that the error is bound by the chosen timestep. 
    \item Additionally, the standard deviation of SEGNO is much smaller than that of other baselines, indicating the numerical stability of SEGNO.
    This result further confirms our theoretical finding that SEGNO can obtain a better latent trajectory between two discrete states. 
\end{itemize}

\begin{table}[t]
    \setlength{\tabcolsep}{2mm}
    \centering 
    \caption{
    The generalization from long-term to short-term. All models are trained on 1000ts and test on 250/500/750/1000 ts. Mean squared error ($\times 10^{-2}$) and the standard deviation are reported. Results averaged across 5 runs.
    }\label{table:long2short}
    \resizebox{\textwidth}{!}{
    \begin{tabular}{crrrrr|rrrrr}
    \toprule
    \multirow{2}{*}{\textbf{Method}}  & \multicolumn{5}{c|}{\textbf{Charged}} & \multicolumn{5}{c}{\textbf{Gravity}}\\
    & \textbf{$250$} \footnotesize{ts} & \textbf{$500$} \footnotesize{ts} & \textbf{$750$} \footnotesize{ts} & \textbf{$1000$} \footnotesize{ts} & Avg & \textbf{$250$} \footnotesize{ts} & \textbf{$500$} \footnotesize{ts} & \textbf{$750$} \footnotesize{ts} & \textbf{$1000$} \footnotesize{ts} & Avg \\
    \midrule
    \textbf{GNN} & 73.40\tiny{$\pm$9.60} & 31.79\tiny{$\pm$5.28} & 12.86\tiny{$\pm$2.81} & 0.826\tiny{$\pm$0.08}& 29.72 & 181.9\tiny{$\pm$26.1} & 90.33\tiny{$\pm$15.5} & 30.66\tiny{$\pm$12.3} & 0.746\tiny{$\pm$0.05} & 75.93\\
    \textbf{GDE} & 92.65\tiny{$\pm$25.0} & 43.94\tiny{$\pm$10.9} & 12.20\tiny{$\pm$23.2} & 0.652\tiny{$\pm$0.05} & 37.36 & 136.0\tiny{$\pm$135} & 56.80\tiny{$\pm$60.6} & 12.21\tiny{$\pm$14.8} & 0.588\tiny{$\pm$0.58} & 51.39\\
    \textbf{EGNN} & 6.756\tiny{$\pm$3.05} & 3.816\tiny{$\pm$4.68} & 3.668\tiny{$\pm$0.74} & 0.568\tiny{$\pm$0.09} & 3.702 & 7.146\tiny{$\pm$7.06} & 29.70\tiny{$\pm$20.4} & 9.712\tiny{$\pm$3.60} & 0.382\tiny{$\pm$0.11} & 11.89\\
    \textbf{GMN} & 10.44\tiny{$\pm$7.43} & 10.92\tiny{$\pm$4.67} & 4.518\tiny{$\pm$1.36} & 0.512\tiny{$\pm$0.16} & 6.598 & 7.430\tiny{$\pm$8.19} & 9.540\tiny{$\pm$12.1} & 5.730\tiny{$\pm$6.69} & 0.349\tiny{$\pm$0.48} & 5.762\\
    \textbf{SEGNN} & 21.78\tiny{$\pm$9.69} & 52.74\tiny{$\pm$15.6} & 34.13\tiny{$\pm$14.9} & 0.342\tiny{$\pm$0.04} & 27.25 & 10.58\tiny{$\pm$6.28} & 49.63\tiny{$\pm$37.7} & 25.82\tiny{$\pm$27.0} & 0.448\tiny{$\pm$0.02} & 21.62\\
    \textbf{SEGNO} & 0.188\tiny{$\pm$0.03} & 0.312\tiny{$\pm$0.06} & 0.360\tiny{$\pm$0.06} & 0.309\tiny{$\pm$0.11} & 0.292 & 0.064\tiny{$\pm$0.02} & 0.128\tiny{$\pm$0.03} & 0.176\tiny{$\pm$0.04} & 0.210\tiny{$\pm$0.07} & 0.145\\
    \bottomrule
    \end{tabular}
}
\end{table}

\subsection{Comparison with GNS}\label{appendix:exp:gns}
We conducted additional evaluations of GNS and SEGNO-avg., which are learned by minimizing average acceleration, on two N-body systems. 
The results of these evaluations are presented in Table~\ref{table:GNS}.
We can observe that SEGNO outperforms both GNS and SEGNO-avg. 
In all cases, showing that training second-order neural odes on position loss outperforms training models on average acceleration.

\begin{table}[h]
    \centering
    \caption{Comparsion ($\times 10^{-2}$) between SEGNO and GNS on simulated N-body systems. Results averaged across 3 runs.}
    \label{table:GNS}
    \resizebox{0.8\textwidth}{!}{
    \begin{tabular}{cccccccc}
    \toprule
    \multirow{2}{*}{\textbf{Method}}  &\multicolumn{3}{c}{\textbf{Charged}} &  \multicolumn{3}{c}{\textbf{Gravity}} \\
     &  1000 ts & 1500 ts & 2000 ts & 1000 ts & 1500 ts & 2000 ts \\
    \midrule
    GNS & 3.245\tiny{$\pm$ 0.068} & 11.689\tiny{$\pm$ 0.330} & 31.632\tiny{$\pm$ 0.206} & 4.204\tiny{$\pm$ 0.081} & 17.095\tiny{$\pm$ 0.136} & 50.275\tiny{$\pm$ 0.201}\\
    SEGNO-avg. & 2.146\tiny{$\pm$ 0.079} & 10.145\tiny{$\pm$ 0.034} & 24.244\tiny{$\pm$ 0.212} & 1.431\tiny{$\pm$ 0.047} & 19.488\tiny{$\pm$ 0.978} & 54.370\tiny{$\pm$ 1.385}\\
    SEGNO & \textbf{0.433}\tiny{$\pm$ 0.013}& \textbf{1.858}\tiny{$\pm$ 0.029} & \textbf{4.285}\tiny{$\pm$ 0.049} & \textbf{0.338}\tiny{$\pm$ 0.027} & \textbf{1.362}\tiny{$\pm$ 0.077} & \textbf{4.017}\tiny{$\pm$ 0.087} \\
    \bottomrule
  \end{tabular}
 }
\end{table}

\subsection{Rollout comparison on N-body systems}
We evaluate the generalizability of models for rollout simulation. 
Specifically, we train all models on 1000ts and use rollout to make the prediction for the longer time step (over 40 rollout steps, indicating over 40000ts.). 
Figure~\ref{fig:rollout} depicts the mean squared error of all methods on two datasets. 
We can observe that all baselines experience numerical explosion due to error accumulation during the rollout process, leading to a quick drop in prediction performance. 
In contrast, SEGNO demonstrates an order-of-magnitude error improvement over other baselines. 
This numerical stability can be attributed to the Neural ODE framework for modeling position and velocity.

\begin{figure}[h]
    \centering
    \includegraphics[width=\textwidth]{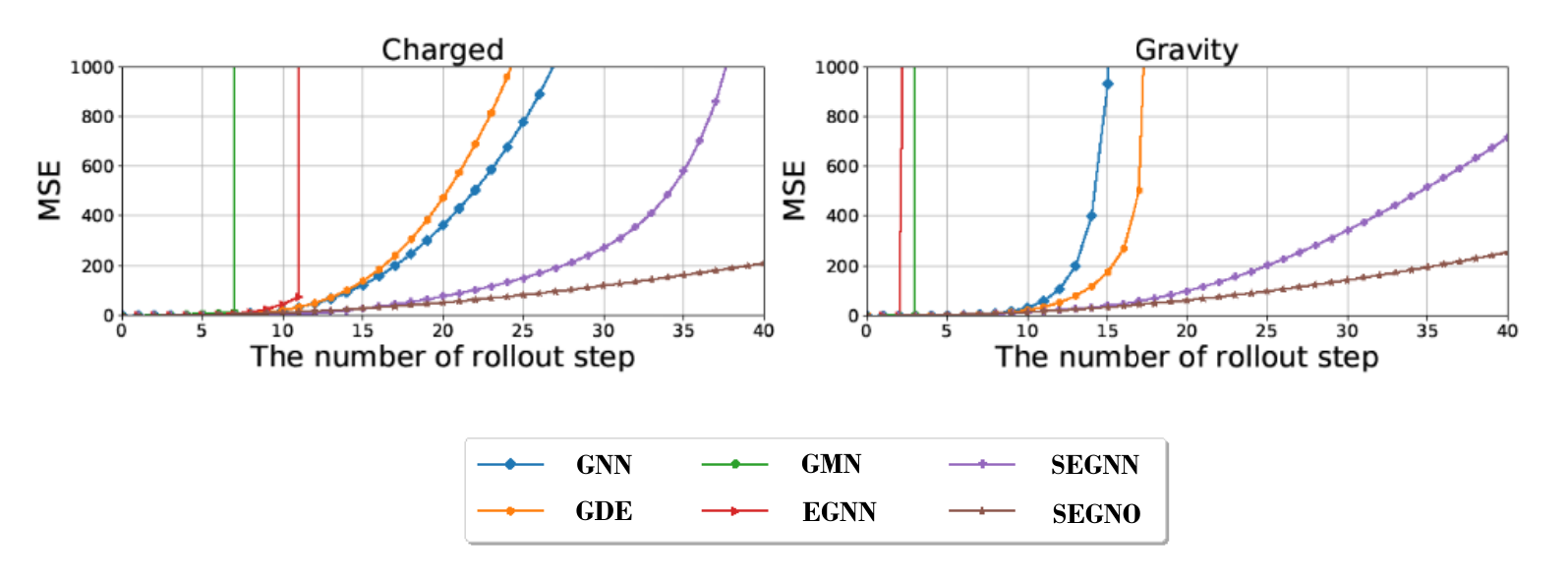}
    \caption{Mean squared error of rollout. Each rollout step is equal to 1000 ts. All models are trained on 1000ts.}
    \label{fig:rollout}
\end{figure}

\subsection{Comparison with physics-informed GNNs}
Following the experimental settings in~\citep{thangamuthu2022unravelling}, we thoroughly compare the rollout position, energy, and momentum errors of SEGNO with those of GNODE~\citep{bishnoi2022enhancing} and HGNN~\citep{thangamuthu2022unravelling} on N-body systems. 
GNODE and HGNN are trained by minimizing acceleration and Hamilton respectively. The target timestep is set to 100ts.
The results are demonstrated in Table~\ref{table:rollout_energy_momentum}.
We can observe that SEGNO consistently outperforms GNODE and HGNN in rollout errors. 
Although SEGNO is only trained on the system positions,  it achieves comparable performance on energy and momentum errors with GNODE, which further validates its effectiveness.

\begin{table}[h]
    \setlength{\tabcolsep}{2mm}
    \centering 
    \caption{
    Rollout position, energy, and momentum errors. All models are trained on 100ts.Results averaged across 3 runs.
    }\label{table:rollout_energy_momentum}
    \resizebox{\textwidth}{!}{
    \begin{tabular}{c|cccccc|ccccc}
    \toprule
    & \multirow{2}{*}{\textbf{Method}} & \multicolumn{5}{c|}{\textbf{Charged}} & \multicolumn{5}{c}{\textbf{Gravity}}\\
    & & \textbf{$100$} \footnotesize{ts} & \textbf{$500$} \footnotesize{ts} & \textbf{$1000$} \footnotesize{ts} & \textbf{$1500$} \footnotesize{ts} & \textbf{$2000$} \footnotesize{ts} & \textbf{$100$} \footnotesize{ts} & \textbf{$500$} \footnotesize{ts} & \textbf{$1000$} \footnotesize{ts} & \textbf{$1500$} \footnotesize{ts} & \textbf{$2000$} \footnotesize{ts} \\
    \midrule
    \multirow{3}{*}{Rollout} & \textbf{HGNN} & 2.16 & 11.47 & 21.95 & 30.40 & 36.89 & 3.26 & 16.78 & 30.83 & 40.07 & 45.94 \\
    & \textbf{GNODE} & 0.36 & 5.41 & 14.05 & 21.61 & 27.39 & 0.32 & 6.42 & 17.53 & 26.17 & 32.16 \\
    & \textbf{SEGNO} & \textbf{0.06} & \textbf{4.26} & \textbf{12.26} & \textbf{19.42} & \textbf{25.08} & \textbf{0.04} & \textbf{5.62} & \textbf{16.84} & \textbf{25.71} & \textbf{31.82} \\
    \midrule
    \multirow{3}{*}{Energy} & \textbf{HGNN} & 1.94 & 7.49 & 11.36 & 14.10 & 15.53 & 2.78 & 10.73 & 16.10 & 19.50 & 22.34 \\
    & \textbf{GNODE} & 0.82 & 5.82 & 10.87 & 14.07 & 15.78 & 0.68 & 7.46 & 14.40 & 18.39 & 20.85 \\
    & \textbf{SEGNO} & \textbf{0.79} & \textbf{5.74} & \textbf{9.93} & \textbf{12.66} & \textbf{14.05} & \textbf{0.11} & \textbf{7.38} & \textbf{13.58} & \textbf{16.94} & \textbf{18.89} \\
    \midrule
    \multirow{3}{*}{Momemtum} & \textbf{HGNN} & 10.12 & 33.37 & 45.98 & 51.67 & 54.38 & 6.99 & 25.65 & 39.12 & 46.08 & 49.96 \\
    & \textbf{GNODE} & 4.17 & 28.65 & 43.97 & 50.84 & 54.06 & 2.35 & 21.43 & 36.93 & 44.92 & 49.19 \\
    & \textbf{SEGNO} & \textbf{2.17} & \textbf{28.39} & \textbf{43.81} & \textbf{50.46} & \textbf{53.54} & \textbf{0.75} & \textbf{21.40} & \textbf{36.88} & \textbf{44.65} & \textbf{48.42} \\
    \bottomrule
\end{tabular}
}
\end{table}

\subsection{Comparison with advanced force field prediction models}
For a more comprehensive evaluation, we add empirical comparison on the MD22 dataset with NeuqIP~\citep{batzner20223}\footnote{https://github.com/mir-group/nequip} and Allegro~\citep{musaelian2023learning}\footnote{https://github.com/mir-group/allegro/tree/main}.
Both models take atom positions and numbers as input and optimize the errors between geometric outputs (generally treated as predicted forces) and true positions. 
We try our best to tune the models and the results are demonstrated in Table~\ref{table:md22_nequip}.
It can be observed both methods do not perform well in predicting the time-dependent evolution of molecular dynamics, which is mainly attributed to the lack of dynamic information.

\begin{table}[h]
    \centering 
    \caption{
    Comparison with equivariant GNNs in the domain of advanced force field prediction. Mean squared error ($\times 10^{-3}$) on MD22 dataset averaged across 3 runs are reported.
    }
    \label{table:md22_nequip}
    \resizebox{0.75\textwidth}{!}{
    \begin{tabular}{cccc}
    \toprule
    \textbf{Molecule} & \textbf{NequIP}  & \textbf{Allgero} & \textbf{SEGNO} \\
    \midrule
    \textbf{Ac-Ala3-NHMe} & 12.060\tiny{$\pm$ 0.220} & 11.785\tiny{$\pm$ 0.193} & \textbf{0.418}\tiny{$\pm$ 0.038} \\
    \textbf{DHA} & 13.275\tiny{$\pm$ 0.164} & 13.126\tiny{$\pm$ 0.121} & \textbf{0.370}\tiny{$\pm$ 0.020} \\
    \textbf{Stachyose} & 11.375\tiny{$\pm$ 0.104} & 11.164\tiny{$\pm$ 0.135} & \textbf{0.440}\tiny{$\pm$ 0.096} \\
    \textbf{AT-AT} & 9.178\tiny{$\pm$ 0.123} & 9.032\tiny{$\pm$ 0.125} & \textbf{0.548}\tiny{$\pm$ 0.006} \\
    \textbf{AT-AT-CG-CG} & 8.959\tiny{$\pm$ 0.066} & 8.866\tiny{$\pm$ 0.125} & \textbf{0.394}\tiny{$\pm$ 0.033} \\
    \textbf{Buckyball Catcher} & 5.418\tiny{$\pm$ 0.058} & 5.331\tiny{$\pm$ 0.077} & \textbf{0.199}\tiny{$\pm$ 0.020} \\
    \textbf{Double-walled Nanotube} & 3.852\tiny{$\pm$ 0.077} & 3.794\tiny{$\pm$ 0.022} & \textbf{0.225}\tiny{$\pm$ 0.008} \\
    \bottomrule
\end{tabular}
}
\end{table}

\subsection{Runtime comparison on N-body systems}
We evaluate the running time of each model on N-body systems with Telsa T4 GPU and report the average forward time in seconds for 100 samples.
The results are listed in Table~\ref{table:runtime}.
We can observe that SEGNO's forward time (0.0227s) remains competitive compared to the best baseline SEGNN (0.0315s), indicating its efficiency.

\begin{table}[h]
    \setlength{\tabcolsep}{3mm}
    \centering 
    \caption{
    Forward time in seconds for a batch size of 100 samples on a Tesla T4 GPU.
    }
    \label{table:runtime}
    \resizebox{\textwidth}{!}{
    \begin{tabular}{cccccccccc}
    \toprule
    \textbf{Linear} & \textbf{GNN} & \textbf{GDE} & \textbf{TFN} & \textbf{SE(3)-Tr.}  & \textbf{RF} & \textbf{EGNN}  & \textbf{GMN} & \textbf{SEGNN} & \textbf{SEGNO} \\
    \midrule
    0.0002 & 0.0064 & 0.0088 & 0.0440 & 0.2661 & 0.0052 & 0.0126 & 0.0137 & 0.0315 & 0.0277  \\
    \bottomrule
    \end{tabular}
}
\end{table}

\subsection{Generalization capability to large systems}
In this part, we evaluate the generalizability of models for larger system sizes. 
Specifically, we train all models on 5-body gravity systems and then test them on 10-body and 20-body gravity systems. Table~\ref{tab:size} shows their results. 
We compare three strong baselines, i.e., EGNN, GMN, and SEGNN. The results show that the performance of all baselines significantly drops when testing on larger systems. 
In contrast, SEGNO still demonstrates a marked improvement over other baselines, especially on 20-body systems. 

\subsection{More results on CMU motion capture}\label{appendix:cmu}
This section illustrates more visualizations of GMN and SEGNO on modeling object motions.
From Figure~\ref{fig:motion_more_walk}, we can observe that SEGNO is able to track the ground-truth trajectories accurately, which is consistent with the performance in Table~\ref{table:motion}.

\begin{figure}[h]
    \centering
    \includegraphics[width=\linewidth]{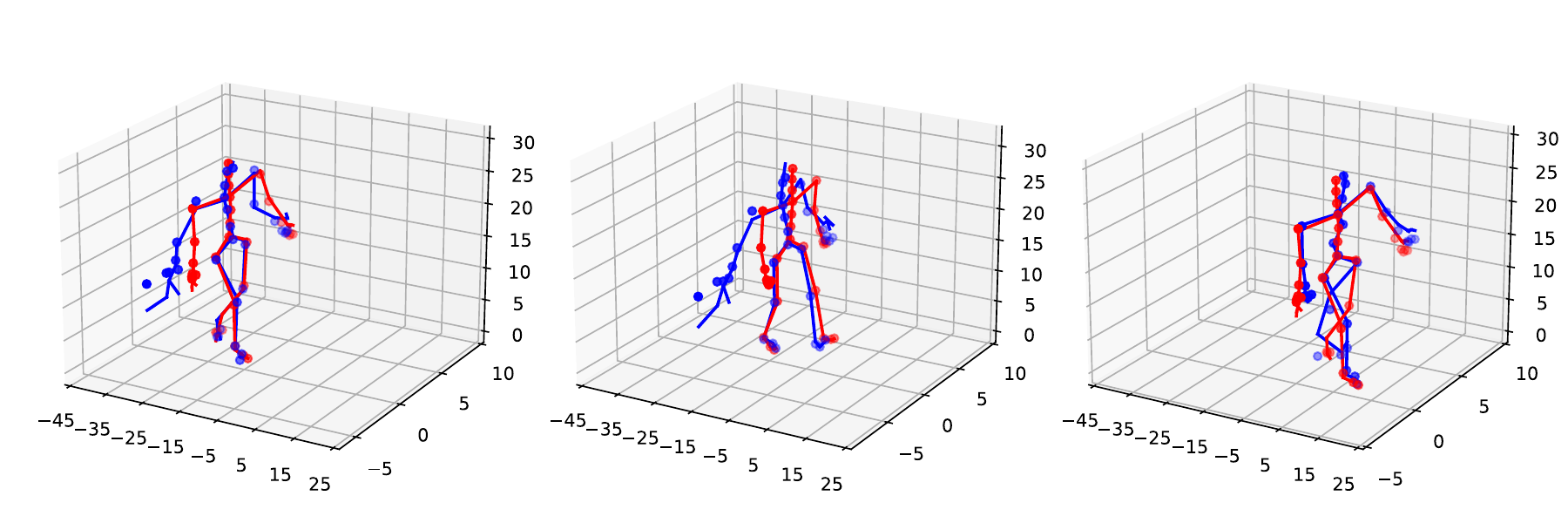}
    \includegraphics[width=\linewidth]{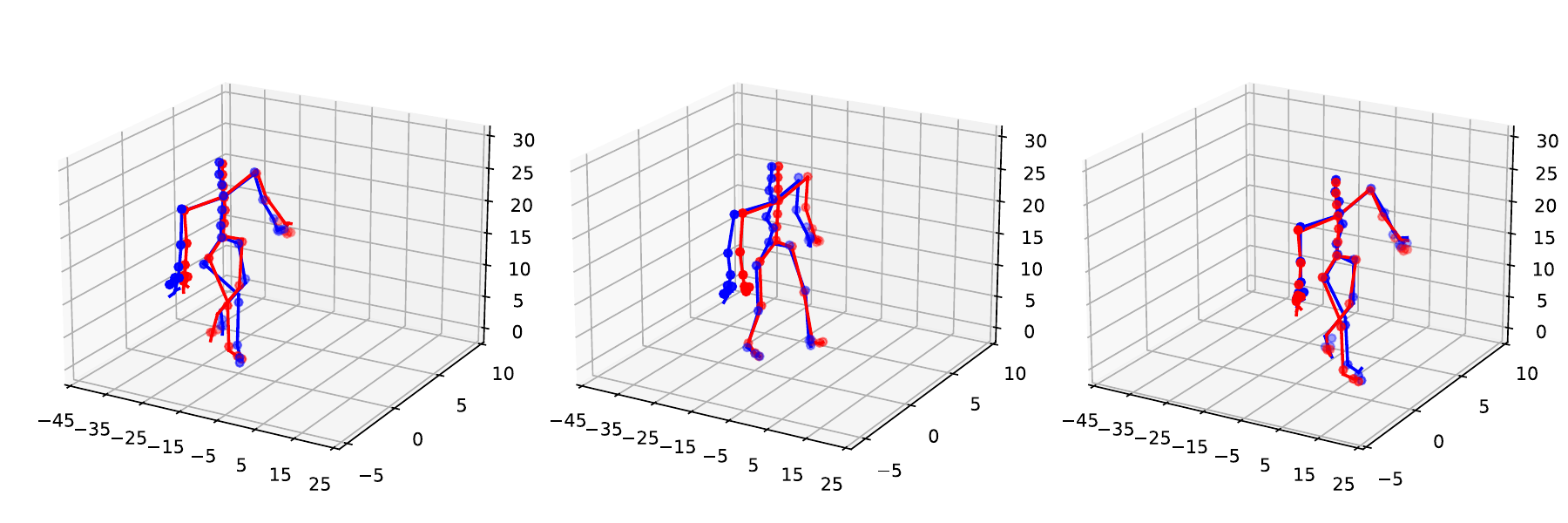}
    \caption{Example snapshots of Motion Capture with 50 time steps. \textit{Top}: Prediction of GMN. \textit{Bottom}: Prediction of SEGNO. Ground truth in \textcolor{red}{red}, and prediction in \textcolor{blue}{blue}.}
    \label{fig:motion_more_walk}
\end{figure}

\begin{table}[t]
    \setlength{\tabcolsep}{2.0mm}
    \centering 
    \caption{
    The results on the larger systems. Mean squared error and the standard deviation are reported.  Results are averaged across 3 runs.
    }
    \label{tab:size}
    \label{table:directcmp}
    \resizebox{0.4\textwidth}{!}{
    \begin{tabular}{ccc}
    \toprule
    \textbf{Method}  & \textbf{10-Body} &  \textbf{20-Body} \\
    \midrule
      \textbf{EGNN} & 0.566\tiny{$\pm$ 0.316}  & 1.985\tiny{$\pm$ 1.111} \\
      \textbf{GMN} & 0.716\tiny{$\pm$ 0.314}  & \underline{1.323}\tiny{$\pm$ 0.430} \\
      \textbf{SEGNN} & \underline{0.333}\tiny{$\pm$ 0.036}   & 3.937\tiny{$\pm$ 2.121} \\
      \textbf{SEGNO} & \textbf{0.152}\tiny{$\pm$ 0.021}  & \textbf{0.850}\tiny{$\pm$ 0.015}  \\
    \bottomrule
    \end{tabular}
    }
\end{table}

\end{document}